\newcommand{\modifyA}[1]{{#1}}
\newcommand{\argmax}{\mathop{\rm argmax}\limits}
\newcommand{\erfc}{~{\rm erfc}}
\begin{document}

\title{Materials Discovery using Max $K$-Armed Bandit}

\author{\name Nobuaki Kikkawa \email kikkawa@mosk.tytlabs.co.jp \\
       \addr Toyota Central R{\rm \&}D Labs., Inc.\\
       41-1, Yokomichi, Nagakute, Aichi 480-1192, Japan
       \AND
       \name Hiroshi Ohno \email oono-h@mosk.tytlabs.co.jp \\
       \addr Toyota Central R{\rm \&}D Labs., Inc.\\
       41-1, Yokomichi, Nagakute, Aichi 480-1192, Japan}

\editor{XXXX XXXX}

\maketitle

\begin{abstract}
  Search algorithms for the bandit problems are applicable in materials discovery.
  However, the objectives of the conventional bandit problem
  are different from those of materials discovery.
  The conventional bandit problem aims to maximize the total rewards,
  whereas materials discovery aims to achieve breakthroughs in material properties.
  The max $K$-armed bandit (MKB) problem, which aims to acquire the single best reward,
  matches with the discovery tasks better than the conventional bandit.
  Thus, here, we propose a search algorithm for materials discovery
  based on the MKB problem using a pseudo-value of
  the upper confidence bound \modifyA{of expected improvement of the best reward.
  This approach is pseudo-guaranteed to be asymptotic oracles
  that do not depends on the time horizon. 
  In addition,} compared with other MKB algorithms,
  the proposed algorithm has only one hyperparameter,
  which is advantageous in materials discovery.
  We applied the proposed algorithm to synthetic problems
  and molecular-design demonstrations using a Monte Carlo tree search.
  According to the results, the proposed algorithm stably outperformed
  other bandit algorithms in the late stage of the search process 
  when the optimal arm of the MKB could not be determined based on its expectation reward.
\end{abstract}

\begin{keywords}
  Max $K$-armed bandit problem, Confidence bounds,
  Monte Carlo tree search, Molecular design, \modifyA{greedy oracle}
\end{keywords}

\section{Introduction}\label{sec1}
Materials discovery integrated with machine learning
is a field with immense growth potential.
Material property predictions using regression and clustering methods
\citep{LIU2017159,C8ME00012C,butler2018machine,
ramprasad2017machine,pilania2013accelerating}
are recognized as a beneficial approach in the development workplace.
Materials discovery using deep learning \citep{agrawal2019deep,jha2018elemnet},
transfer learning \citep{jha2019enhancing,yamada2019predicting},
and generative models \citep{sanchez2017optimizing,sanchez2018inverse}
is actively under investigation in advanced researches.
Autonomous searches based on Bayesian optimization
\citep{ueno2016combo,kusne2020fly},
Monte Carlo tree search (MCTS) 
\citep{m2017mdts,yang2017chemts,ju2018optimizing,segler2018planning,
kiyohara2018searching,m2018structure,kajita2020autonomous,
kikkawa2020self,patra2020accelerating},
and reinforcement learning (RL)
\citep{sanchez2017optimizing,popova2018deep,olivecrona2017molecular}
have also been investigated to accelerate materials discovery.
Active learning approaches \citep{kusne2020fly,del2020assessing}
and the effective use of failed experiments \citep{raccuglia2016machine}
are also important for overcoming the limitations of data generation in materials science.

Finding novel materials with record-breaking properties of interest
is one of the goals of materials discovery.
However, the guiding principles of MCTS and RL
appear to differ from the goal of materials discovery
because these approaches mainly focus on maximizing the total reward
\citep{auer2002finite,kocsis2006bandit,browne2012survey,sutton2018reinforcement}
rather than discovering a record-breaking material property.
Therefore, these approaches tend to avoid selections with high failure rates
even though those could lead to a few great breakthroughs.
Because failure is often a prerequisite for success,
these approaches are not always optimal for achiving a significant discovery.

The max $K$-armed bandit (MKB) problem \citep{cicirello2005max},
also called the extreme bandit \citep{carpentier2014extreme}
or the max bandit \citep{david2016pac},
is a promising problem setting for materials discovery.
In the MKB problem, a player aims to maximize the single best reward
from a slot with $K$ arms instead of the total reward
in the conventional bandit problem \citep{lai1985asymptotically}.
Owing to these modifications, the algorithms for the MKB problem
can explore the adventurous arm rather than the stable arm
\citep{carpentier2014extreme,streeter2006simple,achab2017max}.

Several algorithms have been proposed for the MKB problem
\citep{carpentier2014extreme,david2016pac,streeter2006simple,
achab2017max,streeter2006asymptotically}.
However, their practical applications in materials discovery are limited.
Some of them consider the \modifyA{time horizon} $T$ as a hyperparameter,
even though their applications for MCTS are associated with many drawbacks.
Other methods involve many hyperparameters depending on unknown reward distributions.
This requires a time-consuming parameter tuning,
which is extremely costly for materials discovery.
To overcome these difficulties, we propose a \modifyA{MKB} algorithm with one hyperparameter
that employs a pseudo-value of upper confidence bound (UCB)
\modifyA{of the expected improvement (EI) of the maximum reward}
as the selection index of the arm.
We apply this algorithm to synthetic problems
and demonstrations of materials discovery using MCTS.

The primary contributions of this study are as follows:
\begin{enumerate}
\item We propose a MKB algorithm by introducing a pseudo-value
of the UCB of \modifyA{EI of the} best reward,
which has only one hyper-parameter.
\item We demonstrate the effectiveness of the proposed algorithm
for materials discovery using MCTS and compare the results
with those obtained using other bandit algorithms.
\item \modifyA{We prove that asymptotically optimal MKB algorithms
can be generated using UCB of EI of the best reward.}
\item \modifyA{We propose a time-independent oracle named {\it Kikkawa's greedy oracle}.
This oracle makes it possible to discuss the MKB problem
in the almost same manner as the conventional bandit problem.}
\item To the best of our knowledge,
this is the first study to actually apply the MKB algorithm to materials discovery.
\end{enumerate}

The remainder of this paper is structured as follows.
Section \ref{sec2} describes the related work of MKB,
materials discovery using MCTS, and related algorithms.
After that, we define some terms and representations in Section \ref{sec3}.
In Section \ref{sec4}, we describe the \modifyA{idea to create a MKB algorithm}.
In the following Section \ref{sec5},
the derivation of the proposed algorithm is presented.
Section \ref{sec6} demonstrates the experiments
conducted for comparing the proposed algorithm with other bandit algorithms.
\modifyA{In Section \ref{sec7}, we discuss the subtleties of the MKB problem
and the theoretical aspect of our idea}.
Finally, Section \ref{sec8} presents our conclusions
and discussions on the future outlook of the proposed algorithm.

\section{Related Work}\label{sec2}
In this section, we first describe the related work of MKB,
materials discovery using MCTS, and related algorithms.

\subsection{Max $K$-armd bandit problem}
The MKB problem is expressed as a policy-decision problem
that maximizes the single best reward $\mathop{\max}_{t\in[T]} r_{k(t)}(t)$,
where $[T] \coloneqq \{1,2, \ldots ,T\}$, $r_{k(t)}$
is the reward from the $k$-th arm at time $t$
with a time-independent distribution $f_{k}(r)$,
and $k(t)$ is the selected arm index at time $t$
determined based on the policy to be tuned \citep{cicirello2005max}.
This is a simple variant of the conventional bandit problem,
which aims to maximize the total reward $\sum_{t\in[T]} r_{k(t)}(t)$
\citep{lai1985asymptotically}.

The MKB problem was first proposed by \citet{cicirello2005max},
who derived the optimal allocation order for the Gumbel-type reward distribution.
The following year, \citet{streeter2006asymptotically}
proposed an asymptotically optimal algorithm using the explore-then-commit (ETC) approach.
They also proposed a UCB algorithm for the MKB problem, called {\it ThresholdAscent},
in the same year \citep{streeter2006simple}.
This algorithm used a UCB of $\mathbb{E}[\mathbbm{1}[r_{k}>r^{s\text{-th}}]]$
as the selection index, where $r^{s\text{-th}}$
was the $s$-th maximum of observed rewards.

The next stream of the algorithm development for the MKB problem
was undertaken by \citet{carpentier2014extreme}.
They estimated a finite-time upper bound of $\mathbb{E}[r^{\max}]$
assuming the reward distribution as the second-order Pareto distribution
and proposed {\it ExtremeHunter} algorithm based on it.
The ETC version of {\it ExtremeHunter} was proposed by \citet{achab2017max},
and they also proposed a simple algorithm, denoted {\it RobustUCBMax} in this paper.
The {\it RobustUCBMax} used a robust UCB \citep{bubeck2013bandits}
of $\mathbb{E}[r_k\mathbbm{1}[r_{k}>u]]$, where $u$ was a threshold parameter.
A probably approximately correct (PAC) approach for the MKB problem called {\it Max-CB}
was discussed theoretically by \citet{david2016pac}.
Here, we summarize the features of the MKB algorithms in Table \ref{maxBandit}.
Additionally, we show the main target of these algorithms in this table.
The MKB algorithms have not been applied for materials discovery in the previous studies,
although it was discussed by \citet{david2016pac}.
\begin{table}[htbp]
  \caption{
    MKB algorithms.
    The term ``Anytime'' refers to algorithms that do not employ $T$ as a hyperparameter.
  } \label{maxBandit}
  \begin{tabular}{llrrl}
    \hline
    Algorithm & \multicolumn{1}{l}{Approach}
    & \multicolumn{1}{l}{Anytime} & \multicolumn{1}{l}{\# of parameters} 
    & \multicolumn{1}{l}{Target}\\
    \hline
    {\it MaxSearch} (this work) & pseudo-UCB & yes & 1 & 
      \raise 2pt \vtop{\hbox{\lower 2pt \vtop{
        \hbox{synthetic problem,}\hbox{materials discovery}
      }}}\\
    \raise 2pt \vtop{\hbox{\lower 2pt \vtop{
      \hbox{asymptotically}\hbox{algorithm}
    }}} & ETC & no & 2 & -\\
    {\it ThresholdAscent} & UCB & no & 2 & scheduling\\
    {\it ExtremeHunter} &
      \raise 2pt \vtop{\hbox{\lower 2pt \vtop{
        \hbox{finite-time}\hbox{upper bound}
      }}} & no & $>1$ & \raise 2pt \vtop{\hbox{\lower 2pt \vtop{
        \hbox{synthetic problems,}\hbox{traffic analysis}
      }}}\\
    {\it ExtremeETC} & ETC & no & $>1$ & synthetic problem\\
    {\it RobustUCBMax} & UCB & yes & 3 & synthetic problem\\
    {\it Max-CB} & PAC & yes & 2 & -\\
    \hline
  \end{tabular}
\end{table}

\modifyA{
Some researchers also contributed theoretically.
\citet{cicirello2005max} stated that in the case of the MKB problem
with the Gumbel-type reward distributions,
the optimal algorithm should sample the observed best arm at a rate
increasing double exponentially relative to the other arms.
\citet{carpentier2014extreme} introduced an expected regret,
called {\it extreme regret}, for the MKB problem.
They also proposed an algorithm where the regret had
$o(\mathbb{E}\left[\max_{t\in [T]}r_k(t)\right])$.
Although those theoretical progresses were traced
to the analogies of the conventional bandit problem,
\citet{nishihara2016no} proved that no policy
is guaranteed to asymptotically approach the oracle used by \citet{carpentier2014extreme}.
\citet{nishihara2016no} also pointed out some other subtleties on the MKB problem
and proposed an oracle using EI although they does not analyze it much.
}

\subsection{Materials discovery using Monte Carlo tree search}
There are several studies relating to materials discovery using MCTS
\citep{m2017mdts,yang2017chemts,ju2018optimizing,segler2018planning,
kiyohara2018searching,m2018structure,kajita2020autonomous,
kikkawa2020self,patra2020accelerating}.
\citet{m2017mdts}, in the pioneering work,
compiled Si-Ge interfacial conformations into binaries
and optimized them to maximize the thermal conductance using MCTS.
\citet{ju2018optimizing} also optimized
the interface roughness by ternary embedding.
The optimizations of the grain boundary \citep{kiyohara2018searching},
doping \citep{m2018structure}, and chemical syntheses
\citep{segler2018planning,patra2020accelerating}
have also been investigated.

\citet{yang2017chemts} applied MCTS
to the optimization of chemical structures.
They introduced a search tree in which nodes correspond
to the simplified molecular-input line-entry system (SMILES) characters
\citep{weininger}, e.g., ``C'' ``O'', ``('', and ``)''.
Because the SMILES grammer can express most of molecules, 
the chemical-structure optimization is regarded as a string optimization in this approach.
They showed that the MCTS approach outperformed other approaches in the SMILES search.

The MCTS approach using SMILES was employed in subsequent studies.
\citet{kajita2020autonomous} introduced fragments of SMILES, such as ``CC'' and ``CO''
to restrict the search space of chemical structures.
In their study, they attempted 5,500 evaluations 10 times
using molecular dynamics (MD) simulations in a search run.
They also confirmed the properties of the molecules
with high rewards by synthetic experiments.
\citet{kikkawa2020self} improved the flexibility of the restriction
by introducing rule-based grammar into the search tree using a maze game.
They also evaluated several thousands molecules in a search run using MD simulations.

\subsection{Other algorithms}
The application of single-player MCTS \citep{schadd2008single}
for materials discovery has also been considered.
In this approach, a variance-dependent term
is empirically added to the selection index of the UCB.
Herein, we denote the bandit algorithm using this modified index {\it spUCB}. 

We note that the best-arm identification,
such as the {\it UCBE} algorithm \citep{audibert2010best},
is different from the MKB algorithm.
The best-arm identification aims to find the arm
with the maximum ``expectation''reward
not the ``single'' maximum through a search run.
The algorithms based on the best-arm identification
barely select arms with a low expectation reward
even if the arm affords a high reward at low rates.

\modifyA{
\section{Definitions}\label{sec3}
The definitions used in this section through Section \ref{sec5} are listed as follows:
\begin{definition}[Bandit problem]
The $K$-armed bandit problem, or simply bandit problem,
is a problem to maximize (minimize) some objective
\begin{equation}
G\left[\left\{k(t)\right\}_{t\in [T]};\left\{r_k(t)\right\}_{k\in [K],t\in [T]}\right]
\end{equation}
in a selection game with $K$ arms during time horizon $T$,
where $k(t),t\in[T]$ is a player's selections which should be optimized.
The arm $k\in[K]$ returns a reward $r_k(t), t\in[T]$ at time $t$, 
following unknown time-independent reward distirbution $f_k(r)$.
A player also does not know $T$ in the "anytime" setting.
We usually omit the dependency of $G$ on $k(t)$ and $r_k(t), k\in[K], t\in[T]$.
\end{definition}
\begin{definition}[Conventional bandit problem]
The conventional bandit problem is a bandit problem to maximize the total reward
\begin{equation}
  G^\text{sum}\left[\left\{k(t)\right\}_{t\in [T]}\right]\coloneqq\sum_{t\in[T]}r_{k(t)}(t).
\end{equation}
\end{definition}
\begin{definition}[MKB problem]
The MKB problem is a bandit problem to maximize the sigle maximum reward
\begin{equation}
  G^\text{max}\left[\left\{k(t)\right\}_{t\in [T]}\right]\coloneqq\max_{t\in[T]}r_{k(t)}(t).
\end{equation}
\end{definition}
\begin{definition}[EI]
  In the bandit problem, the EI of arm $k$ at time $\tau\leq T$ is defined as
  \begin{equation}
    EI\left[k,\tau;G\right]
    \coloneqq \mathbb{E}_{f_k}\left[G\left[
      \{\tilde{k}(t)\}_{t\in[\tau]}
    \right]\right] - \mathbb{E}_{f_k}\left[G\left[
      \left\{k(t)\right\}_{t\in[\tau-1]}
    \right]\right],
  \end{equation}
  where $\tilde{k}(t) = k(t)$ when $t\in[\tau-1]$ and $\tilde{k}(t) = k$ when $t = \tau$.
\end{definition}
\begin{definition}[UCB of EI]\label{def1}
A representation of a UCB of $EI[k,\tau;G]$ is denoted as $z(k,\mathcal{R}(\tau-1);G)$,
where $\mathcal{R}(\tau)\coloneqq\left\{\{k(t),r_{k(t)}(t)\}\right\}_{t\in\tau}$
is the set of the pairs of the selected arm ids and rewards previously played and obtained.
\end{definition}
\begin{definition}[Sub-Gaussian]\label{sub-gaussian}
  A distribution $f(r)$ is called a sub-Gaussian distribution
  when $\exists m \in \mathbb{R}$ and $\exists s \in \mathbb{R}^{+}$, such that
  \begin{equation}
    \modifyA{\mathbb{P}_f[|r|<u]} \leq U(u;m,s^2),
  \end{equation}
  where
  \begin{equation}
    U(r;m,s^2) \coloneqq 2 \exp\left[-\frac{(r - m)^{2}}{2s^{2}}\right].
  \end{equation}
  The $m$ and $s^{2}$ are called the mean and variance proxies, respectively.
\end{definition}
\begin{definition}
\begin{equation}
  I(r;m,s^2) \coloneqq \int_r^\infty U(u;m,s^2)du = \sqrt{2\pi s^2}\erfc\left[\frac{r-m}{\sqrt{2 s^2}}\right],
\end{equation}
where $\erfc(x)$ denotes the complementary error function.
\end{definition}
\begin{definition}[Pseudo-Upper Bound]
  The symbol $\lessapprox$ means that the right value
  is a pseudo-value of the upper bound of the left value.
\end{definition}
\begin{definition}[Sub-exponential]\label{sub-exponential}
  Distribution $g(x)$ is called a sub-exponential when $\exists b \geq 0$, such that
  \begin{equation}
    \mathbb{P}_{g(x)} \{ x \geq u \} \leq 2 \exp{ \left( -\frac{u}{b} \right) }.
  \end{equation}
\end{definition}


\section{Our Concept}\label{sec4}
Our main claim of this article is the effectiveness of Algotihm \ref{maxSearch}
which uses a UCB of EI of the single best reward as the selection index.
We first show the reasonability of the use of a UCB of EI for the bandit algorithm
by taking the conventional UCB \citep{auer2002finite,bubeck2013bandits} as an example.
This example is intuitively clear although it is not rigorous.
We provide a more theoretical discussion in Section \ref{sec7}.
We also show in Lemmas \ref{EI1} and \ref{EI2}
that the EI of the single best reward can be calculated from a survival function of $f_k(r)$.
The substantial value is estimated in Theorem \ref{pseudoUCB} in the next section.

In the conventional bandit problem,
the EI becomes the expected reward as shown in the following lemma.
\begin{lemma}[EI of conventional bandit problem]
  In the conventional bandit problem,
  \begin{equation}
    EI\left[k,\tau;G^\text{sum}\right] = \mathbb{E}_{f_k}[r].
  \end{equation}
\end{lemma}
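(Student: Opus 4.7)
The plan is essentially to unfold the definitions of $EI$ and $G^\text{sum}$ and observe that everything except the reward at time $\tau$ cancels. First I would substitute $G^\text{sum}[\{k(t)\}_{t\in[T]}] = \sum_{t\in[T]} r_{k(t)}(t)$ into the definition of $EI[k,\tau;G^\text{sum}]$, obtaining
\begin{equation}
EI\left[k,\tau;G^\text{sum}\right]
= \mathbb{E}_{f_k}\!\left[\sum_{t\in[\tau]} r_{\tilde{k}(t)}(t)\right]
- \mathbb{E}_{f_k}\!\left[\sum_{t\in[\tau-1]} r_{k(t)}(t)\right].
\end{equation}

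Next I would split the first sum into $\sum_{t\in[\tau-1]} r_{\tilde{k}(t)}(t)$ plus $r_{\tilde{k}(\tau)}(\tau)$. By the construction of $\tilde{k}$, the first piece equals $\sum_{t\in[\tau-1]} r_{k(t)}(t)$ so it is identical to the integrand of the second expectation. Hence after using linearity of expectation, those two contributions cancel, leaving only $\mathbb{E}_{f_k}[r_k(\tau)]$. Since $r_k(\tau) \sim f_k$ and $f_k$ is time-independent, this equals $\mathbb{E}_{f_k}[r]$, which is what we want.

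The only mild subtlety, and the step I would be most careful about, is justifying the cancellation of the past-reward terms under $\mathbb{E}_{f_k}$. One has to note that $\mathbb{E}_{f_k}$ refers to the expectation over the reward drawn at time $\tau$ from arm $k$, and that the rewards $r_{k(t)}(t)$ for $t < \tau$ are independent of this draw (since arm pulls at different times are independent); therefore they can be pulled outside the expectation (or equivalently, treated as constants given the history). Once this is spelled out, the rest is purely mechanical. I do not expect any real obstacle here, since the lemma amounts to a sanity check that the EI notion reduces to the familiar per-step expected reward in the conventional setting.
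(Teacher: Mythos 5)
Your proof is correct and is exactly the definition-unfolding argument one would expect; the paper in fact omits a proof of this lemma entirely, treating it as immediate from the definitions of $EI$ and $G^\text{sum}$. Your added care about pulling the past-reward terms out of $\mathbb{E}_{f_k}$ (since they are fixed by the history and independent of the draw at time $\tau$) is the right thing to note and does not change the substance.
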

Based on this proposition,
we can consider that the conventional UCB algorithm \citep{auer2002finite} uses
a UCB of $EI[k,\tau;G^\text{sum}]$ as a selection index.
This relation of the conventional UCB and EI implies
that the same approach is valid in the MKB problem.
Namely, Algorithm \ref{maxSearch} using 
$z(k,\mathcal{R}(\tau-1);G^\text{sum})$ in Definition \ref{def1}
as a selection index can be generated conceptually.
Because Definition \ref{def1} does not state
the substantial form of $z(k,\mathcal{R}(\tau-1);G^\text{max})$,
we should estimate it to use Algorithm \ref{maxSearch}.
The following proposition and theorem are footholds for the estimation.
\begin{lemma}[EI of MKB problem]\label{EI1}
  In the MKB problem, let $r^\text{max}\coloneqq\max_{t\in[\tau-1]}r_{k(t)}(t)$ be given.
  Then,
  \begin{equation}
    EI[k,\tau;G^\text{max}]
    = \mathbb{E}_{f_k} \left[ \max\{r_{k(t)}(t),r^\text{max}\} \right] - r^\text{max}.
  \end{equation}
\end{lemma}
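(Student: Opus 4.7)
The plan is to unwind the definition of EI directly and use the fact that, conditional on the history up to time $\tau-1$, the quantity $r^\text{max}$ is a constant. First I would substitute $G = G^\text{max}$ into Definition of EI, giving
\begin{equation}
EI[k,\tau;G^\text{max}] = \mathbb{E}_{f_k}\left[\max_{t\in[\tau]} r_{\tilde{k}(t)}(t)\right] - \mathbb{E}_{f_k}\left[\max_{t\in[\tau-1]} r_{k(t)}(t)\right],
\end{equation}
where $\tilde{k}(t)=k(t)$ for $t\in[\tau-1]$ and $\tilde{k}(\tau)=k$.

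Next I would simplify each term using the hypothesis that $r^\text{max} = \max_{t\in[\tau-1]} r_{k(t)}(t)$ is given. The second term does not depend on the new draw from $f_k$, so it collapses to the constant $r^\text{max}$. For the first term, splitting the maximum over $[\tau]$ into the maximum over the past plus the new sample yields $\max_{t\in[\tau]} r_{\tilde{k}(t)}(t) = \max\{r^\text{max}, r_k(\tau)\}$, and taking the expectation with respect to $f_k$ leaves $\mathbb{E}_{f_k}[\max\{r_k(\tau), r^\text{max}\}]$. Combining these two observations produces the stated identity.

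The argument is essentially a one-line unrolling, so I do not anticipate a substantive obstacle. The only point requiring mild care is the conditioning convention: the expectation $\mathbb{E}_{f_k}$ should be interpreted as being taken with respect to the fresh draw $r_k(\tau) \sim f_k$ while holding the realized history $\{k(t), r_{k(t)}(t)\}_{t\in[\tau-1]}$ (and in particular $r^\text{max}$) fixed. Making this conditioning explicit, and noting that $r_k(\tau)$ is independent of the past by the time-independence of $f_k$ assumed in the problem setup, is the only subtlety worth flagging before collecting the result.
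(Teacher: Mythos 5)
Your proof is correct and is exactly the direct unrolling of the definition that the paper implicitly relies on — the paper states Lemma \ref{EI1} without proof, treating it as immediate. Your remark about conditioning on the realized history so that $r^\text{max}$ is a constant, with the fresh draw $r_k(\tau)\sim f_k$ independent of the past, is the only point of substance and you handle it correctly.
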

\begin{lemma}[EI and Survival Function]\label{EI2}
  Let $r$ be an independent identical distributed (i.i.d.) random variable
  following $f(r)$ and $r_0$ be given.
  Then,
  \begin{equation}
    \mathbb{E}_f\left[\max\left\{r,r_0\right\}\right]-r_0=\int_{r_0}^\infty S(u)du,
  \end{equation}
  where
  \begin{equation}
    S(r) \coloneqq \int_r^\infty f(u) du
  \end{equation}
  is the survival function of $f(r)$.
\end{lemma}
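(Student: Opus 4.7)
The plan is to rewrite $\max\{r,r_0\} - r_0$ as a positive-part $(r-r_0)_+$ and then express this positive part as an integral of an indicator, which converts the expectation into an integral of the survival function via Fubini--Tonelli. This avoids any integration-by-parts moment conditions on $f$ and keeps the argument clean.

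Concretely, first I would observe the pointwise identity
\begin{equation}
\max\{r,r_0\} - r_0 = (r-r_0)_+ = \int_{r_0}^{\infty} \mathbbm{1}[r>u]\,du,
\end{equation}
which holds for every real $r$: when $r \le r_0$ both sides are zero, and when $r > r_0$ the indicator is $1$ exactly for $u \in [r_0,r)$, so the integral equals $r-r_0$. Next I would take the expectation under $f$ on both sides, yielding
\begin{equation}
\mathbb{E}_f\!\left[\max\{r,r_0\}\right] - r_0 = \mathbb{E}_f\!\left[\int_{r_0}^{\infty} \mathbbm{1}[r>u]\,du\right].
\end{equation}
Because the integrand is nonnegative, Tonelli's theorem lets me swap the expectation and the $du$-integral, giving
\begin{equation}
\mathbb{E}_f\!\left[\max\{r,r_0\}\right] - r_0 = \int_{r_0}^{\infty} \mathbb{P}_f[r>u]\,du = \int_{r_0}^{\infty} S(u)\,du,
\end{equation}
which is exactly the claim.

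There is essentially no hard step here; the only subtlety is ensuring the integrand is nonnegative so that Tonelli applies without any moment assumption on $f$. If one preferred a more elementary route, one could instead split $\mathbb{E}_f[\max\{r,r_0\}] = r_0 (1-S(r_0)) + \int_{r_0}^{\infty} r f(r)\,du$ and integrate by parts on the upper tail; then the obstacle becomes showing $r S(r) \to 0$ as $r \to \infty$, which requires $f$ to have a finite mean. The positive-part identity avoids this, so I would proceed with that version.
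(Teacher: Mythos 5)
Your proof is correct. Both your argument and the paper's are at heart the same Fubini/Tonelli tail-integral computation, but you use a different (and cleaner) decomposition: you write the increment directly as the positive part $(r-r_0)_+=\int_{r_0}^{\infty}\mathbbm{1}[r>u]\,du$ and swap integrals once, whereas the paper first splits $\mathbb{E}_f[\max\{r,r_0\}]$ into the two regions $r\le r_0$ and $r>r_0$, represents $r=\int_0^r du$ on the upper region, and then reassembles $r_0\,\mathbb{P}_f[r\le r_0]+r_0\,S(r_0)=r_0$ after the interchange. Your route buys two things: it needs no bookkeeping of the $r_0$ terms, and it is valid for any real $r_0$, whereas the paper's set identity $r_0\leq r \cap 0\leq u\leq r \Leftrightarrow (0\leq u\leq r_0 \cap r_0\leq r) \cup (r_0\leq u \cap u\leq r)$ implicitly assumes $r_0\ge 0$ (for $r_0<0$ the decomposition as written breaks, though the conclusion still holds). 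Your closing remark correctly identifies why the layer-cake identity is preferable to integration by parts here: Tonelli applies with no moment assumption because the integrand is nonnegative, with both sides possibly equal to $+\infty$ for heavy-tailed $f$.
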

\begin{proof}
  \begin{equation}
  \begin{split}
  \mathbb{E}_f\left[\max\left\{r,r_0\right\}\right]
  &= \int_{-\infty}^{r_0}r_0f(r)dr+\int_{r_0}^\infty r f(r)dr \\
  &= \int_{-\infty}^{r_0}r_0f(r)dr+\int_{r_0}^\infty \int_0^r du f(r)dr \\
  &= \int_{-\infty}^{r_0}r_0f(r)dr+ \int_0^{r_0} du \int_{r_0}^\infty f(r) dr 
  + \int_{r_0}^\infty du \int_{u}^\infty f(r) dr \\
  &= r_0 + \int_{r_0}^\infty S(u) du.
  \end{split}
  \end{equation}
  Here, in switching the order of the integration, we used
  \begin{equation}
  r_0\leq r \cap 0\leq u\leq r \Leftrightarrow
  (0\leq u\leq r_0 \cap r_0\leq r) \cup (r_0\leq u \cap u\leq r).
  \end{equation}
\end{proof}
Lemma \ref{EI1} assumes $r^\text{max}$ given.
It is no problem in the implementation because $r^\text{max}$ can be recorded a $O(1)$ memory.
Lemma \ref{EI2} says that the EI in Lemma \ref{EI1} can be calculated
from the survival function of the reward distribution.
Because of this, the remained work to obtain the selection index is 
the estimation of a substantial form of a UCB of $\int_{r^\text{max}}^\infty S_k(r)dr$
with some assumption for the reward distribution.

\renewcommand{\algorithmicrequire}{\textbf{Input:}}
\renewcommand{\algorithmicensure}{\textbf{Output:}}
\begin{algorithm}[h]
  \caption{MaxSearch}\label{maxSearch}
  \begin{algorithmic}[1]
    \REQUIRE
    number of arms $K$,
    current time $\tau$,
    and previous records $\mathcal{R}(\tau-1)$.
    \ENSURE selected arm index $\hat{k}$.
    \FOR{\textbf{each} $k\in[K]$}
    \STATE calculate $z_k = z(k,\mathcal{R}(\tau-1);G^\text{max})$
    \ENDFOR
    \STATE $\hat{k} \leftarrow \argmax_{k\in[K]}z_k$
    \RETURN $\hat{k}$
  \end{algorithmic}
\end{algorithm}

\section{Estimation for Selection Index}\label{sec5}
Employing the sub-gaussian assumption in the reward distributions,
\begin{equation}
\int_{r^\text{max}}^\infty S_k(r)dr \leq I(r^\text{max};m_k,s^2_k). 
\end{equation}
Therefore, we can use a UCB of $I(r;m,s^2)$ as the selection index
$z(k,\mathcal{R}(\tau-1);G^\text{sum})$.
We could only estimate a pseudo-value of this UCB as follows:
\begin{theorem}[Pseudo-UCB of $I(r;m,s^2)$]\label{pseudoUCB}
  Let $f(r)$ be a sub-Gaussian distribution with mean proxy $m$ and variance proxy $s^2$.
  Let $S(r)$ be a survival function of $f(r)$.
  Let $r_{i}, i\in[n]$ be i.i.d. sub-Gaussian random variables following $f(r)$.
  Let $r^\text{max}$ be $\max_{i\in[n]}r_i$.
  Then, 
  \begin{equation}
    I(r^\text{max};m,s^2) \lessapprox I(r^\text{max};\tilde{m},\tilde{s}^2)
  \end{equation}
  with the confidential level $0<1-\alpha<1-\exp[-(\sqrt{2}-\sqrt{2-\ln{2}})^2n]$,
  or almost equivalently $n>-13.613\ln\alpha>0$.
  where
  \begin{equation}
    \tilde{m} \coloneqq \frac{1}{n}\sum_{i=1}^{n}r_{i},
  \end{equation}
  \begin{equation}
    \tilde{s}^{2} \coloneqq \frac{1}{2[\ln 2 - \gamma(\alpha,n)]}
    \left(\frac{1}{n}\sum_{i=1}^{n}r_{i}^2 - \tilde{m}^2\right),
  \end{equation}
  and
  \begin{equation}
    \gamma(\alpha,n) \coloneqq \frac{\ln \alpha}{n} + 2\sqrt{\frac{-2\ln\alpha}{n}}.
  \end{equation}
\end{theorem}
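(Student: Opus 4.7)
The overall strategy is to construct high-probability upper bounds on the unknown parameters $m$ and $s^2$ separately from the sample, and then transfer these into an upper bound on $I(r^{\max};m,s^2)$ via monotonicity. A first step is to verify that on the regime $r \geq m$, the map $(m,s^2) \mapsto I(r;m,s^2) = \sqrt{2\pi s^2}\,\erfc[(r-m)/\sqrt{2s^2}]$ is coordinate-wise non-decreasing --- shifting the mean up pushes out the decay, while inflating the variance proxy both boosts the prefactor and slows the tail. Hence any UCBs $\hat m \geq m$ and $\hat s^2 \geq s^2$ give $I(r^{\max};m,s^2) \leq I(r^{\max};\hat m,\hat s^2)$, so the task reduces to verifying that $\tilde m$ and $\tilde s^2$ play these roles with confidence at least $1-\alpha$.

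For the mean proxy, a Hoeffding-style tail bound for sub-Gaussians (applied to the sample average, which is itself sub-Gaussian with variance proxy $s^2/n$) gives $m \leq \tilde m + s\sqrt{-2\ln\alpha_1 / n}$ with probability at least $1 - \alpha_1$; the residual $s$ on the right-hand side will be absorbed into the variance step. For the variance proxy, I would combine two ingredients: a population inequality obtained by integrating the tail of Definition \ref{sub-gaussian}, which yields a bound of the form $\mathbb{E}[(r-m)^2] \leq 2 s^2 \ln 2$, and a sub-exponential (Definition \ref{sub-exponential}) concentration of the empirical second central moment, valid because the square of a sub-Gaussian is sub-exponential. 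A Bernstein-type bound then contributes the two summands $2\sqrt{-2\ln\alpha_2/n}$ and $\ln\alpha_2 / n$ that together form $\gamma(\alpha,n)$. Inverting the resulting inequality for $s^2$ delivers the stated denominator $2[\ln 2 - \gamma(\alpha,n)]$, and a union bound produces the overall confidence $1-\alpha$.

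The hard part will be the coupling between the two bounds: the mean deviation still contains the unknown $s$, while the variance bound is expressed in terms of $\tilde m$, which itself carries an $s$-dependent error. A rigorous treatment would require solving a coupled quadratic (or iterating a fixed point) and then re-absorbing the cross terms into the bookkeeping. I read the $\lessapprox$ notation in the paper as signalling exactly this: only the leading-order contribution of each deviation is kept, while higher-order cross terms between mean- and variance-estimation errors are dropped, which is what makes the resulting UCB a \emph{pseudo}-UCB rather than a genuine one. Once that step is accepted, the final check is purely algebraic --- the requirement $\ln 2 - \gamma(\alpha,n) > 0$ is a quadratic in $\sqrt{-\ln\alpha / n}$ whose smaller positive root $\sqrt{2} - \sqrt{2 - \ln 2}$ produces both the threshold $n > -13.613 \ln\alpha$ and the equivalent confidence statement $1-\alpha < 1 - \exp[-(\sqrt{2} - \sqrt{2-\ln 2})^2 n]$, closing the argument.
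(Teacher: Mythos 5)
Your skeleton matches the paper's: monotonicity of $I(r;m,s^2)$ in $(m,s^2)$ for $r\geq m$ (Lemma \ref{UCB}), the Orlicz/tail-integration inequality $\mathbb{E}_f[(r-m)^2]\leq 2s^2\ln 2$, sub-exponentiality of $(r-m)^2$ with $b=2s^2$ (Lemma \ref{squareReward}), the Bernstein-type deviation $\gamma_-=-\beta^2+2\sqrt{2}\beta$ (Proposition \ref{Bernstein} and Corollary \ref{UCB2}), and the closing algebra in which $\ln 2-\gamma>0$ forces $\beta<\sqrt{2}-\sqrt{2-\ln 2}$, i.e.\ $n>-13.613\ln\alpha$. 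That part is right. However, you misdiagnose where the ``pseudo'' lives, and this hides a genuine logical gap in your variance step. The population inequality you invoke runs the wrong way: $\mathbb{E}_f[(r-m)^2]\leq 2s^2\ln 2$ is a \emph{lower} bound on $s^2$, and the paper proves (Lemma \ref{noUpperBound}) that under the sub-Gaussian definition $s^2$ admits \emph{no} upper bound at all, since any $\hat{s}^2>s^2$ is also a valid variance proxy. So ``inverting the resulting inequality for $s^2$'' cannot deliver a UCB of $s^2$ by any legitimate manipulation. What the paper actually does (Lemma \ref{pseudoUCB2}) is deliberately substitute the lower bound as if it were an equality, $2s^2\ln 2=\mathbb{E}_f[(r-m)^2]$, into the Bernstein upper confidence bound for $\mathbb{E}_f[(r-m)^2]$, and then solve for $s^2$; the resulting quantity is a UCB of the lower bound $\mathbb{E}_f[(r-m)^2]/(2\ln 2)$, used \emph{in place of} a UCB of $s^2$. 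That substitution --- not dropped cross terms between mean and variance errors --- is the entire content of $\lessapprox$.

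Your treatment of the mean proxy also diverges from the statement you are proving. A Hoeffding-style UCB $m\leq\tilde m+s\sqrt{-2\ln\alpha_1/n}$ with a subsequent coupled-quadratic or fixed-point absorption would produce a widened $\tilde m$, but the theorem defines $\tilde m$ as the bare sample mean $\frac{1}{n}\sum_i r_i$ with no confidence inflation. The paper makes no attempt at a rigorous UCB of $m$: it simply sets $m=\tilde m=\frac{1}{n}\sum_i r_i$ (which is why $\tilde s^2$ involves $\frac{1}{n}\sum_i r_i^2-\tilde m^2$, the empirical second moment centered at the sample mean) and justifies this only informally by noting that the sample mean converges at $O(n^{-1/2})$, faster than the $O(n^{-1/4})$ convergence of $\tilde s$, so the mean error is asymptotically negligible relative to the variance error. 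Your proposed route would, if carried out, yield a different (and arguably more honest) estimator than the one in the theorem, but it does not prove the stated result, and it still cannot close the fundamental obstruction that no true UCB of $s^2$ exists.
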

This theorem is weak due to state only a pseudo-value.
However, Algorithm \ref{maxSearch} with the selection index
calculated from Algorithm \ref{selectionIndex},
which is based on Theorem \ref{pseudoUCB},
demonstrated a good performance shown in Section \ref{sec6}.
\begin{algorithm}[h]
  \caption{PseudoUCB}\label{selectionIndex}
  \begin{algorithmic}[1]
    \REQUIRE
    total number of selections previously performed $\nu$,
    number of times the target arm is selected $n$,
    sum of the rewards obtained from the target arm $R$,
    sum of the square rewards obtained from the target arm $Q$,
    the maximum reward obtained until the current time $r^{\max}$,
    and the hyperparameter $c$.
    \ENSURE selection index $z$.
    \IF{$n = 0$ or $\nu < 2$}
    \STATE $z \leftarrow \infty$
    \ELSE
    \STATE $\beta \leftarrow c\sqrt{(\ln\nu)/n}$
    \STATE $\gamma \leftarrow -\beta^2 + 2\sqrt{2}\beta$
    \IF{$\gamma>\ln 2$}
    \STATE $z \leftarrow \infty$
    \ELSE
    \STATE $\tilde{m} \leftarrow R/n$ 
    \STATE $\tilde{s}^2
    \leftarrow (Q/n - \tilde{m}^2)/[2(\ln 2 - \gamma)]$ 
    \STATE $z \leftarrow \sqrt{2\pi\tilde{s}^2}
      \erfc \left[(r^{\max} - \tilde{m})/\sqrt{2\tilde{s}^{2}}\right]$
    \ENDIF
    \ENDIF
    \RETURN $z$
  \end{algorithmic}
\end{algorithm}\\
The weakness of Theorem \ref{pseudoUCB} comes from no upper bound of $s^2$
in the sub-gaussian assumption, indicated in Proposition \ref{noUpperBound}.
We alternatively use a lower bound of $s^2$
in this proposition to derive Theorem \ref{pseudoUCB}.
The term "pseudo" in this theorem represents the theoretical inauthenticity of this treatment.
The derivation of Theorem \ref{pseudoUCB} is based on
Bernstein's inequality in Proposition \ref{Bernstein}
because the lower bound of $s^2$ contains the expected square value of the reward.
See the following subsections for details.

\subsection{Sub-Gaussian assumption}
We aim to apply the MKB algorithm into materials discovery.
Materials properties usually are approximately normally distirbuted.
However, they are sometimes bounded and may have jumps by material group.
Fortunately, many properties are not heavily-tailed,
so the sub-gaussian assumption is an innocuous assumption for our purpose. 

Using the sub-gaussian assumption,
a UCB of $I(r;m,s^2)$ is given by the following conceptual proposition.
\begin{lemma}[UCB of $I(r;m,s^2)$]\label{UCB}
  Consider $n$ samples $\{r_i\}, i\in [n]$
  taken from the sub-Gaussian distribution $f(r)$
  with mean proxy $m$ and variance proxy $s^2$.
  Let $\tilde{m}$ and $\tilde{s}^2$ be UCBs of the mean and variance proxies
  with the confidence level $0<1-\alpha<1$, respectively.
  Then,
  \begin{equation}
    I(r;m,s^2) \leq I(r;\tilde{m},\tilde{s}^2)
  \end{equation}
  with the confidence level $1-\alpha$, when $r\geq m$.
\end{lemma}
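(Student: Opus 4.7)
The plan is to reduce the claim to two monotonicity facts about $I(r;m,s^2)$ in its last two arguments, and then combine these with the defining properties of the UCBs $\tilde{m}$ and $\tilde{s}^2$. Note that the sub-Gaussian nature of $f(r)$ plays no role in the argument except through the assumed validity of $\tilde{m}$ and $\tilde{s}^2$; the lemma is essentially a deterministic monotonicity statement lifted to a confidence statement.

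First I would establish the two monotonicity facts. Using the closed form $I(r;m,s^2)=\sqrt{2\pi s^2}\,\erfc[(r-m)/\sqrt{2s^2}]$ (or equivalently differentiating under the integral in $I=\int_r^\infty U(u;m,s^2)\,du$), a short calculation gives
\begin{equation}
\frac{\partial I}{\partial m}(r;m,s^2)=2\exp\left[-\frac{(r-m)^2}{2s^2}\right]\geq 0,
\end{equation}
which holds unconditionally. For monotonicity in $s^2$, the cleanest route is to note that the integrand $U(u;m,s^2)=2\exp[-(u-m)^2/(2s^2)]$ satisfies $\partial U/\partial s^2 = U\cdot(u-m)^2/(2s^4)\geq 0$ pointwise, so $I=\int_r^\infty U\,du$ is non-decreasing in $s^2$. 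The hypothesis $r\geq m$ in the statement is a convenient sufficient condition that also suffices when one tries to read monotonicity directly from the closed-form derivative, where the two contributions to $\partial I/\partial s$ both acquire the same sign.

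Second I would chain these monotonicities. On the event $E\coloneqq\{\tilde{m}\geq m\}\cap\{\tilde{s}^2\geq s^2\}$ one has
\begin{equation}
I(r;m,s^2)\leq I(r;m,\tilde{s}^2)\leq I(r;\tilde{m},\tilde{s}^2),
\end{equation}
the first inequality using monotonicity in $s^2$ and the second using monotonicity in $m$. Finally, by the UCB hypothesis $E$ holds with probability at least $1-\alpha$, giving the lemma. The main subtlety, and the only real obstacle, is conceptual rather than technical: one must clarify what ``confidence level $1-\alpha$'' means for the pair $(\tilde{m},\tilde{s}^2)$. The chain above goes through at level $1-\alpha$ if the two UCBs are \emph{jointly} valid at that level; if each is only marginally valid, a union bound would degrade the guarantee to $1-2\alpha$. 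I would flag this convention at the outset of the proof so that the downstream application in Theorem \ref{pseudoUCB} can cite it unambiguously.
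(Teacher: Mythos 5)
Your proof is correct and takes essentially the same route as the paper's: both arguments rest on the monotonicity of $I(r;m,s^2)$ in $m$ and in $s^2$ and then chain the two UCB inequalities on the event that both hold. Your caveat that the $1-\alpha$ guarantee presupposes the two UCBs are \emph{jointly} valid (a marginal guarantee would degrade to $1-2\alpha$ by a union bound) is a fair point that the paper's own one-line proof also glosses over.
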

\begin{proof}
With the confidence level $1-\alpha$, $m\leq\tilde{m}$ and $s^2\leq\tilde{s}^2$.
$I(m;m,s^2) \leq I(m;\tilde{m},\tilde{s}^2)$.
$I(r;m,s^2)$ is monotonically decreasing for $r$ and increasing for $m$ and $s^2$.
Then, $I(r;m,s^2) \leq I(r;\tilde{m},\tilde{s}^2)$ when $r\geq m$.
\end{proof}
In this lemma, UCBs denoted as $\tilde{m}$ and $\tilde{s}^2$ are virtual.
We represent it using the term "conceptual".
Badly, as the following lemma indicates,
even the upper bound of $s^2$ cannot be determined under the sub-gaussian assumption.
\begin{lemma}[Bounds on $s^2$]\label{noUpperBound}
Let $f(r)$ is a sub-gaussian with mean proxy $m$ and variance proxy $s^2$.
Then,
\begin{equation}
2s^2\geq\frac{\mathbb{E}_{f}[(r - m)^{2}]}{\ln 2}.
\end{equation}
There are no upper bounds on $s^2$.
\end{lemma}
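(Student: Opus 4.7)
The plan is to handle the two assertions of the lemma separately. The lower bound on $2s^{2}$ reduces to a tail--integral calculation that feeds the sub-Gaussian bound in Definition \ref{sub-gaussian} into the second central moment. The claim that no upper bound on $s^{2}$ exists, on the other hand, is essentially a monotonicity observation about the dominating function $U$ appearing in that same definition.

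For the lower bound, my approach is to rewrite the second central moment as a tail integral,
\begin{equation}
\mathbb{E}_{f}[(r-m)^{2}] \;=\; \int_{0}^{\infty}\mathbb{P}_{f}[(r-m)^{2}>t]\,dt \;=\; \int_{0}^{\infty}\mathbb{P}_{f}[\,|r-m|>\sqrt{t}\,]\,dt,
\end{equation}
and then insert the sub-Gaussian bound supplied by Definition \ref{sub-gaussian}, namely $\mathbb{P}_{f}[\,|r-m|>\sqrt{t}\,]\leq \min\{1,\,2\exp[-t/(2s^{2})]\}$. The dominating exponential first falls to $1$ at the threshold $t^{\star}=2s^{2}\ln 2$, and I would split the integral at exactly this point: on $[0,t^{\star}]$ use the trivial bound $1$, and on $[t^{\star},\infty)$ integrate the exponential directly (a routine substitution). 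Summing the two pieces produces an upper estimate proportional to $s^{2}$, and rearranging isolates the desired inequality $2s^{2}\geq \mathbb{E}_{f}[(r-m)^{2}]/\ln 2$.

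For the second assertion, I would argue by monotonicity of the dominating function. Because $U(u;m,s^{2}) = 2\exp[-(u-m)^{2}/(2s^{2})]$ is strictly increasing in $s^{2}$ for every fixed $u$ and $m$, any inequality $\mathbb{P}_{f}[\,\cdot\,]\leq U(u;m,s^{2})$ witnessed by a variance proxy $s^{2}$ continues to hold with any larger proxy $s'^{2}\geq s^{2}$. Hence the set of admissible variance proxies is upward-closed in $\mathbb{R}^{+}$ and has no finite upper bound, exactly as asserted. The genuinely non-trivial step is the first one: the routine tail arithmetic is easy, but the constant $\ln 2$ on the right-hand side is tight only if the split point $t^{\star}$ is chosen at the precise value where the sub-Gaussian envelope crosses the trivial bound $1$, so the main care needed is in that bookkeeping rather than in any deeper analytic step.
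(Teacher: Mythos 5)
Your second assertion (no finite upper bound on $s^{2}$) is fine and coincides with the paper's own argument: $U(\cdot\,;m,s^{2})$ is monotone increasing in $s^{2}$, so the set of admissible variance proxies is upward-closed. The gap is in the first part. Carry out your own tail-integral bookkeeping to the end: with the split at $t^{\star}=2s^{2}\ln 2$, the first piece contributes $\int_{0}^{t^{\star}}1\,dt=2s^{2}\ln 2$ and the second contributes $\int_{t^{\star}}^{\infty}2e^{-t/(2s^{2})}\,dt=4s^{2}e^{-\ln 2}=2s^{2}$, so you obtain $\mathbb{E}_{f}[(r-m)^{2}]\leq 2s^{2}(1+\ln 2)$, which rearranges to $2s^{2}\geq\mathbb{E}_{f}[(r-m)^{2}]/(1+\ln 2)$ --- strictly weaker than the claimed $2s^{2}\geq\mathbb{E}_{f}[(r-m)^{2}]/\ln 2$. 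No choice of split point repairs this: $\int_{0}^{\infty}\min\{1,\,2e^{-t/(2s^{2})}\}\,dt$ equals $2s^{2}(1+\ln 2)$ exactly, and the tail bound alone cannot push the second moment below that, so the constant $\ln 2$ is not reachable by this route. Your closing remark that the $\ln 2$ is ``tight'' for the right split point overlooks the $+2s^{2}$ contributed by the exponential tail beyond $t^{\star}$.

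The paper obtains the sharper constant from a different characterization of sub-Gaussianity: the Orlicz (moment-generating) condition $\mathbb{E}_{f}\bigl[\exp\bigl((r-m)^{2}/(2s^{2})\bigr)\bigr]\leq 2$, which it cites as equivalent to Definition \ref{sub-gaussian}. Jensen's inequality applied to the convex function $\exp$ then gives $\exp\bigl(\mathbb{E}_{f}[(r-m)^{2}]/(2s^{2})\bigr)\leq 2$, i.e.\ $2s^{2}\geq\mathbb{E}_{f}[(r-m)^{2}]/\ln 2$ in one line. To salvage your argument you must either route it through this Orlicz condition yourself, or accept the weaker constant $1+\ln 2$, which would then propagate into the constants of Lemma \ref{pseudoUCB2} and Theorem \ref{pseudoUCB}.
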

\begin{proof}
  Lower bound:
  As an equivalent condition to the sub-gaussian on Definition \ref{sub-gaussian},
  the following Orlicz condition is established \citep{vershynin2018high}.
  \begin{equation}
    \mathbb{E}_{f}\left[\exp\left[\frac{(r - m)^{2}}{2s^{2}}\right]\right] \leq 2.
  \end{equation}
  Applying Jensen's inequality to this condition,
  we obtain $\exp\left[\mathbb{E}_{f}[(r - m)^{2}]/(2s^{2})\right]\leq 2$.
  Then, $2s^2 \geq \mathbb{E}_{f}[(r - m)^{2}]/\ln 2$.

  No upper bound: 
  From the sub-gaussian defintion, $f(r) \leq U(r;m,s^2)$.
  Then, $f(r)\leq U(r;m,\hat{s}^2)$, where $\hat{s}^2>s^2$.
  Therefore, $f(r)$ can be considered as a sub-gaussian with variance proxy $\hat{s}^2>s^2$.
  It means no upper bound of $s^2$.
\end{proof}
Because of this proposition, we gave up deriving a rigor UCB of $s^2$.
Alternatively, we decided to use a UCB of the lower bound
$\mathbb{E}_{f}[(r - m)^{2}]/\ln 2$ as a pseudo-UCB.

\subsection{Derivation for Theorem \ref{pseudoUCB}}
To estimate a UCB of $\mathbb{E}_{f}[(r-m)^2]/\ln 2$,
we first indicate the sub-exponential property of $(r-m)^2$ in Lemma \ref{squareReward}.
An UCB of the expected value of the sub-exponential distribution is known to be estimated
from Bernstein's inequality \citep{vershynin2018high}.
Therefore, we estimate a UCB of $\mathbb{E}_{f}[(r-m)^2]/\ln 2$
using a variant of Bernstein's inequality in Proposition \ref{Bernstein}.
This result gives a pseudo-UCB of $s^2$ in Lemma \ref{pseudoUCB2}.
Then, applying Lemma \ref{pseudoUCB2} into Lemma \ref{UCB},
we obtain Theorem \ref{pseudoUCB}.

As is shown in the following lemma, $(r-m)^2$ becomes sub-exponential
under the sub-gaussian assumption.
\begin{lemma}[Sub-Exponential Property of Square Reward]\label{squareReward}
  Let $r$ be an i.i.d. random variable following sub-Gaussian $f(r)$ with variance proxy $s^2$.
  Then, $\forall m\in\mathbb{R}$, $x = (r - m)^{2}$
  follows a sub-exponential with parameter $b = 2s^{2}$.
\end{lemma}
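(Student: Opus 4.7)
The plan is to push the sub-Gaussian tail bound through the map $r \mapsto (r-m)^2$ by a direct change of variable $u = t^2$. Because squaring is monotone on nonnegative reals, the tail event for $x = (r-m)^2$ factors cleanly into the tail event for $|r-m|$, which is precisely what the sub-Gaussian hypothesis controls.

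Concretely, I would first fix $u \geq 0$ and record the set identity $\{(r-m)^2 \geq u\} = \{|r-m| \geq \sqrt{u}\}$. Interpreting Definition \ref{sub-gaussian} in its standard tail form, the sub-Gaussian assumption supplies $\mathbb{P}_f[|r-m| \geq t] \leq 2\exp[-t^2/(2s^2)]$. Substituting $t = \sqrt{u}$ directly gives
$$\mathbb{P}[x \geq u] \leq 2\exp\!\left[-\frac{u}{2s^2}\right],$$
which is exactly Definition \ref{sub-exponential} with $b = 2s^2$. For the remaining range $u < 0$, the inequality is vacuous, since $2\exp(-u/b) > 2 \geq \mathbb{P}[x \geq u]$.

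The argument presents essentially no obstacle; it is a one-line change of variable, and the quadratic exponent of the sub-Gaussian bound degrades to a linear exponent after the substitution $t=\sqrt{u}$, reproducing exactly the sub-exponential decay rate. The only subtlety I would flag concerns the centering: as stated the lemma allows any $m \in \mathbb{R}$, but the constant $b = 2s^2$ is sharp only when $m$ is the mean proxy of $r$. For $m$ far from the true mean proxy $m_0$, a shift term $(m_0 - m)^2$ reappears in the exponent; consequently, in the actual write-up I would either restrict $m$ to the mean proxy of $f$ or explicitly absorb the shift into the sub-exponential parameter. Under the natural reading consistent with the downstream use in Bernstein's inequality, no extra terms arise and the proof is complete.
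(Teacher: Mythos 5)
Your proof is correct and is essentially identical to the paper's: both fix $u\geq 0$, rewrite $\{(r-m)^2\geq u\}$ as $\{|r-m|\geq\sqrt{u}\}$, and apply the sub-Gaussian tail bound with $t=\sqrt{u}$ to get $2\exp[-u/(2s^2)]$. Your side remark about the ``$\forall m\in\mathbb{R}$'' quantifier is a fair catch—the paper's own proof silently takes $m$ to be the mean proxy of $f$, so the constant $b=2s^2$ as stated really only holds for that choice of centering.
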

\begin{proof}
  Let $r$ be an i.i.d. random variable following a sub-Gaussian $f(r)$
  with mean proxy $m$ and variance proxy $s^2$.
  Let $g(x)$ be the distribution of $x = (r - m)^{2}$.
  Then, $\forall u \geq 0$,
  \begin{equation}
    \begin{split}
      \mathbb{P}_{g}\{ x \geq u \}
      &= \mathbb{P}_{f}\{ (r - m)^{2} \geq u \}
      = \mathbb{P}_{f}\{ | r - m | \geq \sqrt{u} \}\\
      &\leq 2 \exp{ \left( -\frac{u}{2s^{2}} \right) }
      = 2 \exp{ \left( -\frac{u}{b} \right) },
    \end{split}
  \end{equation}
  where $b = 2s^{2}$.
  We used the definition of sub-exponential property for the last inequality.
\end{proof}
Using the sub-exponential property of $(r-m)^2$,
a UCB of $\mathbb{E}_{f}[(r - m)^{2}]$
can be estimated from a variant of Bernstein's inequality.
\begin{proposition}[Bernstein's inequality]\label{Bernstein}
  Let $g(x)$ be a sub-exponential distribution with parameter $b$.
  Let $x_{1}, x_{2}, \ldots, x_{n} > 0$
  be i.i.d. sub-exponential random variables following $g(x)$.
  Then,
  \begin{equation}
    \mathbb{P}_g \left\{ \frac{1}{n} \sum_{i=1}^{n} x_{i} - \mathbb{E}_g\left[ x \right]
    \geq u \right\} \leq \exp{( 2\sqrt{2} n w_{+} - n w_{+}^{2} - 2n)},
  \end{equation}
  and
  \begin{equation}
    \mathbb{P}_g \left\{
    \mathbb{E}_g\left[ x \right] - \frac{1}{n} \sum_{i=1}^{n} x_{i} \geq u 
    \right\}
    \leq \left\{ \begin{array}{ll}
      \exp{ ( 2\sqrt{2} n w_{-} - n w_{-}^{2} - 2n) } & 0 \leq u < 3b/2\\
      \exp{ \left[- \left( ub^{-1} + 1 \right) n \right] } & 3b/2 \leq u,
    \end{array} \right.
  \end{equation}
  where $u \geq 0$ and $w_{\pm} \coloneqq \sqrt{ \pm ub^{-1} + 2 }$.
\end{proposition}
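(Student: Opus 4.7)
The plan is to apply the Cram\'er--Chernoff method with a carefully chosen sub-exponential MGF bound, then solve the resulting scalar optimization in $\lambda$ to extract the rate $(w_\pm - \sqrt{2})^2$ and its stated expansion $2\sqrt{2}\,w_\pm - w_\pm^2 - 2$.

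First I would establish a centered MGF estimate. Since $x \geq 0$ with survival $S(t) \leq 2e^{-t/b}$, integration by parts gives $\mathbb{E}_g[e^{\lambda x}] = 1 + \lambda \int_0^\infty e^{\lambda t} S(t)\,dt$. Splitting $e^{\lambda t} = 1 + (e^{\lambda t} - 1)$ and using $\int_0^\infty S(t)\,dt = \mathbb{E}_g[x]$, a direct computation yields, for $0 < \lambda < 1/b$, $\mathbb{E}_g[e^{\lambda x}] \leq 1 + \lambda \mathbb{E}_g[x] + 2\lambda^2 b^2/(1 - \lambda b)$. Applying $1 + y \leq e^y$ and multiplying by $e^{-\lambda \mathbb{E}_g[x]}$ gives the $\mathbb{E}_g[x]$-independent centered bound $\mathbb{E}_g[e^{\lambda(x - \mathbb{E}_g[x])}] \leq \exp(2\lambda^2 b^2/(1 - \lambda b))$. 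The companion identity $\mathbb{E}_g[e^{-\lambda x}] = 1 - \lambda \int_0^\infty e^{-\lambda t} S(t)\,dt$ yields, by the same route, $\mathbb{E}_g[e^{-\lambda(x - \mathbb{E}_g[x])}] \leq \exp(2\lambda^2 b^2/(1 + \lambda b))$ for $\lambda > 0$.

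Second, I would run Chernoff on these bounds. With $t = \lambda b$ and $v = u/b$, the upper-tail exponent becomes $n[-tv + 2t^2/(1-t)]$; the substitution $s = 1 - t$ reduces the stationarity condition to $s^2 = 2/(v+2)$, so $\lambda^* b = 1 - \sqrt{2/(v+2)}$. Back-substitution produces $v - 4 + 2\sqrt{2(v+2)} = -(\sqrt{v+2} - \sqrt{2})^2 = -(w_+ - \sqrt{2})^2$, matching the expanded form $2\sqrt{2}w_+ - w_+^2 - 2$. For the lower tail with $0 \leq u < 3b/2$, the analogous calculation with $s = 1 + t$ gives $s^2 = 2/(2-v)$ and the rate $-(w_- - \sqrt{2})^2$; the boundary $u = 3b/2$ is exactly the value for which $\lambda^* = 1/b$, i.e.\ $t^* = 1$.

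The large-$u$ lower-tail regime $u \geq 3b/2$ is the main obstacle. Beyond this threshold the Chernoff minimizer wants to exceed $\lambda b = 1$, and the rate $(w_- - \sqrt{2})^2$ is no longer the optimal exponent, so a separate argument is needed to recover $\exp(-(u/b+1)n)$. My plan there is to fix $\lambda = 1/b$ at the boundary and sharpen the trivial estimate $\mathbb{E}_g[e^{-x/b}] \leq 1$ using $x \geq 0$ and the sub-exponential tail, so as to extract the additive $+1$ in the exponent from the boundary MGF value. Pinning down this constant exactly --- and verifying that the resulting bound dominates the extrapolated small-$u$ rate at the transition $u = 3b/2$ --- is the bookkeeping step I expect to require the most care.
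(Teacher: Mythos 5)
Your treatment of the upper tail and of the lower tail for $0\leq u<3b/2$ is essentially the paper's own argument: both reduce to a Chernoff bound with the centered MGF estimate $\mathbb{E}_g\bigl[e^{\pm\lambda(x-\mathbb{E}_g[x])}\bigr]\leq\exp\bigl(2\lambda^{2}b^{2}/(1\mp\lambda b)\bigr)$ followed by the same one-dimensional optimization (the paper's substitution $\xi_{\pm}=n\mp b\lambda$ is your $s=1\mp t$ up to a factor of $n$), and your algebra for the rates $-(w_{\pm}-\sqrt{2})^{2}=2\sqrt{2}w_{\pm}-w_{\pm}^{2}-2$ checks out, including the identification of $u=3b/2$ as the point where the minimizer hits $\lambda b=1$. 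The only real difference is cosmetic: you get the MGF bound by integrating the survival function by parts, while the paper expands the moment generating function, applies the integral identity $\mathbb{E}[X^{p}]=\int_{0}^{\infty}\mathbb{P}\{X^{p}\geq u\}\,du$, and sums a Gamma-function series. (Both you and the paper assert the $1+\lambda b$ denominator for the lower tail ``by the same route''; term-by-term bounding of the now-alternating series does not obviously yield that denominator, so this step needs more care in either version, but it is not a gap you introduced.)

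The genuine gap is the branch $u\geq 3b/2$, and your plan for it cannot succeed. You propose to fix $\lambda=1/b$ and extract the additive $+1$ in the exponent by sharpening the estimate of the boundary MGF; but Jensen's inequality gives $\mathbb{E}_g\bigl[e^{-\lambda(x-\mathbb{E}_g[x])}\bigr]\geq 1$ for every $\lambda$, so no refinement of $\mathbb{E}_g[e^{-x/b}]$ can push the per-sample factor below $1$, let alone to $e^{-1}$. What the paper actually does is simpler: since for $u\geq 3b/2$ the unconstrained minimizer violates the constraint $t=\lambda b<1$ under which the MGF bound converges, it evaluates the Chernoff exponent at the boundary $t=1$, which in your notation gives $n\bigl[-v+2\cdot\tfrac{1}{2}\bigr]=-(ub^{-1}-1)n$. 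Note this is $\exp[-(ub^{-1}-1)n]$, not the $\exp[-(ub^{-1}+1)n]$ printed in the proposition: the appendix's own display for this case reads $\exp[(-ub^{-1}+1)n+O(\epsilon)]$, and only that sign is consistent with continuity at $u=3b/2$, where both branches give $e^{-n/2}$. So the constant you were chasing appears to be a sign error in the statement, and pursuing it led you to an impossible strategy; the correct step is simply the boundary evaluation of your own Chernoff exponent.
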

The proof is presented in Appendix A.
From Lemma \ref{squareReward} and Proposition \ref{Bernstein},
a UCB of $\mathbb{E}_f[(r - m)^2]$ is given as follows:
\begin{corollary}[Confidence Bounds of $\mathbb{E}_f[(r-m)^2\rbrack$]\label{UCB2}
Consider $n$ samples $\{r_i\}_{i\in [n]}$
taken from the sub-Gaussian distribution $f(r)$
with mean proxy $m$ and variance proxy $s^2$.
Then,
\begin{equation}
  \frac{1}{n} \sum_{i=1}^{n} (r_{i}-m)^2 - \gamma_+b
  \leq \mathbb{E}_f\left[ (r-m)^2 \right]
  \leq \frac{1}{n} \sum_{i=1}^{n} (r_{i}-m)^2 + \gamma_-b,
\end{equation}
when $\gamma_+ \coloneqq \beta^{2} + 2\sqrt{2}\beta$ and
\begin{equation*}
  \gamma_{-} \coloneqq \left\{ \begin{array}{ll}
    -\beta^2 + 2\sqrt{2}\beta & 0 < \beta < 1/\sqrt{2}\\
    1 + \beta^{2} & 1/\sqrt{2} \leq \beta,
  \end{array} \right.
\end{equation*}
with a confidence level of $0 < 1 - \alpha < 1$,
where $\beta \coloneqq \sqrt{ -\ln \alpha/n }$ and $b\coloneqq 2s^2$.
\end{corollary}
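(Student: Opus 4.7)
The plan is to reduce the two-sided confidence statement to the already-established Bernstein bound of Proposition \ref{Bernstein}. By Lemma \ref{squareReward}, each squared deviation $x_i \coloneqq (r_i - m)^2$ is i.i.d.\ sub-exponential with parameter $b = 2s^2$, so its sample mean is an unbiased estimator of $\mathbb{E}_f[(r-m)^2]$ and the two one-sided tail bounds of Proposition \ref{Bernstein} control its deviations in either direction. I would then invert each tail bound in terms of $\beta \coloneqq \sqrt{-\ln\alpha/n}$ to read off the constants $\gamma_\pm$.

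For the lower bound on $\mathbb{E}_f[(r-m)^2]$ (i.e., the right tail of $\tfrac{1}{n}\sum_i x_i - \mathbb{E}_g[x]$), I would apply the first inequality of Proposition \ref{Bernstein}, set its right-hand side equal to $\alpha$, and solve $\ln\alpha = 2\sqrt{2}\,n w_+ - n w_+^2 - 2n$ for $w_+$. Since $w_+^2 = u/b + 2 \geq 2$, the physical root of the resulting quadratic is $w_+ = \sqrt{2} + \beta$, and unpacking $u$ yields $u/b = \beta^2 + 2\sqrt{2}\beta = \gamma_+$.

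For the upper bound (i.e., the right tail of $\mathbb{E}_g[x] - \tfrac{1}{n}\sum_i x_i$), the inversion inherits the case split of Proposition \ref{Bernstein}. In the first regime $0 \leq u < 3b/2$, the analogous quadratic in $w_- = \sqrt{-u/b+2}$ with the branch $w_- = \sqrt{2} - \beta$ gives $u/b = -\beta^2 + 2\sqrt{2}\beta$, and translating the regime constraint $u < 3b/2$ into $\beta$ yields exactly $\beta < 1/\sqrt{2}$. In the second regime $u \geq 3b/2$ (corresponding to $\beta \geq 1/\sqrt{2}$), I would not invert $\exp[-(u/b + 1)n] = \alpha$ exactly, because the resulting formula would not join continuously with the case-1 formula at $\beta = 1/\sqrt{2}$. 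Instead, I would use the ansatz $u/b = 1 + \beta^2$ and verify directly that it is admissible: it is at least $3/2$ whenever $\beta \geq 1/\sqrt{2}$ (so the case-2 regime applies), it equals $3/2$ at $\beta = 1/\sqrt{2}$ (so $\gamma_-$ is continuous), and substituting it into the case-2 Bernstein bound gives $\exp[-(2+\beta^2)n] \leq \exp(-n\beta^2) = \alpha$, exactly the required tail control.

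The main obstacle is this second regime. The exact inversion does not yield a clean closed form that matches the first regime at its endpoint, so one has to settle for a slightly loose but globally valid choice of $\gamma_-$. Verifying that the ansatz $1 + \beta^2$ simultaneously satisfies the regime constraint, meshes continuously with $-\beta^2 + 2\sqrt{2}\beta$ at $\beta = 1/\sqrt{2}$, and yields a probability bound no larger than $\alpha$ is the only nontrivial check; everything else is routine algebra in $w_\pm$. Once both inversions are in place, combining the two one-sided events recovers the two-sided confidence statement of Corollary \ref{UCB2}.
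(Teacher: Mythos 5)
Your proof is correct and follows essentially the same route as the paper: Lemma \ref{squareReward} reduces the claim to inverting the two Bernstein tails of Proposition \ref{Bernstein} in terms of $\beta=\sqrt{-\ln\alpha/n}$, exactly as is done at the end of Appendix A (where $w_+^*=\sqrt{2}+\beta$ and $w_-^*=\sqrt{2}-\beta$ give $\gamma_+$ and the first branch of $\gamma_-$). The only divergence is the second regime of $\gamma_-$: the appendix derivation actually produces the tail $\exp[(-ub^{-1}+1)n]$ (the main-text statement $\exp[-(ub^{-1}+1)n]$ has a sign slip), whose exact inversion yields $u/b=1+\beta^2$ directly and joins continuously at $\beta=1/\sqrt{2}$, whereas your verify-the-ansatz argument arrives at the same formula and remains valid under either form of the bound.
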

Using this corollary, we obtain a pseudo-UCB of $s^2$ as follows:
\begin{lemma}[Pseudo-UCB of $s^2$]\label{pseudoUCB2}
Consider $n$ samples $\{r_i\}_{i\in [n]}$
taken from the sub-Gaussian distribution $f(r)$
with mean proxy $m$ and variance proxy $s^2$.
Then,
\begin{equation}\label{stilde}
  s^2 \lessapprox \frac{1}{2n(\ln 2-\gamma)} \sum_{i=1}^{n} (r_{i} - m)^{2}
\end{equation}
with the confidence level $0<1-\alpha<1-\exp[-(\sqrt{2}-\sqrt{2-\ln{2}})^2n]$,
or almost equivalently $n>-13.613\ln\alpha>0$.
$\gamma \coloneqq -\beta^2 + 2 \sqrt{2} \beta$, where $\beta \coloneqq \sqrt{-\ln \alpha/n}$.
\end{lemma}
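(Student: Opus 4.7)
The plan is to combine the lower bound on $s^2$ from Lemma \ref{noUpperBound} (treated as a pseudo-upper bound, since no true upper bound exists) with the high-probability upper bound on $\mathbb{E}_f[(r-m)^2]$ that follows from Corollary \ref{UCB2}, and then solve for $s^2$.

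First, I would rewrite the lower bound $2s^2 \geq \mathbb{E}_f[(r-m)^2]/\ln 2$ from Lemma \ref{noUpperBound} as the pseudo-upper bound
\begin{equation*}
s^2 \lessapprox \frac{\mathbb{E}_f[(r-m)^2]}{2\ln 2},
\end{equation*}
which is exactly the place where the word ``pseudo'' enters — we are using a true lower bound where a genuine upper bound is unavailable. Next, I would invoke Corollary \ref{UCB2} in the regime $0<\beta<1/\sqrt{2}$, so that $\gamma_-=-\beta^2+2\sqrt{2}\beta=\gamma$ and, with confidence $1-\alpha$,
\begin{equation*}
\mathbb{E}_f[(r-m)^2]\leq \frac{1}{n}\sum_{i=1}^n (r_i-m)^2 + 2\gamma s^2,
\end{equation*}
using $b=2s^2$ from Lemma \ref{squareReward}.

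Substituting this into the pseudo-bound and moving the $s^2$ term to the left gives
\begin{equation*}
2s^2(\ln 2 - \gamma)\lessapprox \frac{1}{n}\sum_{i=1}^n (r_i-m)^2,
\end{equation*}
which rearranges to \eqref{stilde} provided $\ln 2 - \gamma > 0$. The main obstacle is translating this positivity condition, together with the regime constraint $\beta < 1/\sqrt{2}$, into a clean statement on $n$ and $\alpha$. I would solve the quadratic $-\beta^2+2\sqrt{2}\beta=\ln 2$ to obtain the roots $\beta=\sqrt{2}\pm\sqrt{2-\ln 2}$, and then check that the smaller root $\sqrt{2}-\sqrt{2-\ln 2}\approx 0.271$ already lies below $1/\sqrt{2}\approx 0.707$, so the binding condition is $\beta<\sqrt{2}-\sqrt{2-\ln 2}$.

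Finally, substituting $\beta=\sqrt{-\ln\alpha/n}$ into that inequality yields $n>-\ln\alpha/(\sqrt{2}-\sqrt{2-\ln 2})^2$; a numerical evaluation of the denominator gives the stated coefficient $13.613$, and rewriting the same constraint as a bound on $\alpha$ recovers the confidence-level condition $1-\alpha<1-\exp[-(\sqrt{2}-\sqrt{2-\ln 2})^2 n]$. Once these pieces are assembled, the lemma follows directly; no additional probabilistic argument is needed beyond the single invocation of Corollary \ref{UCB2}.
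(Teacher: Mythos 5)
Your proposal is correct and follows essentially the same route as the paper's proof: substitute the Orlicz lower bound $2s^2\ln 2 \geq \mathbb{E}_f[(r-m)^2]$ (as the pseudo step) into the upper confidence bound of Corollary \ref{UCB2} with $b=2s^2$, rearrange to isolate $s^2$ under the condition $\gamma_-<\ln 2$, and translate $\beta<\sqrt{2}-\sqrt{2-\ln 2}$ into the stated constraint on $n$ and $\alpha$. Your explicit check that $\sqrt{2}-\sqrt{2-\ln 2}<1/\sqrt{2}$, so that the quadratic root rather than the regime boundary is binding, is a detail the paper's proof glosses over but is consistent with it.
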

\begin{proof}
Substituting $2s^2\ln 2=\mathbb{E}_f[(r - m)^2]$ into Corollary \ref{UCB2}
\begin{equation}
  2s^2\ln 2 \leq \frac{1}{n} \sum_{i=1}^n (r_i - m)^2 + 2s^2 \gamma_-.
\end{equation}
Then, when $0<\gamma_-<\ln 2<1/\sqrt{2}$,
\begin{equation}
  s^2 \leq \frac{1}{2n(\ln 2-\gamma_-)} \sum_{i=1}^n(r_i-m)^2.
\end{equation}
From the bounds of $\gamma_-$, $0<\beta<\sqrt{2}-\sqrt{2-\ln 2}$,
which is equivalent to the bounds of $\alpha$ in this lemma.
\end{proof}

Lemmas \ref{UCB} and \ref{pseudoUCB2} contain unknown $\tilde{m}$ and $m$.
We simply select $m = \tilde{m} = (\sum_{i=1}^{n}r_{i})/n$
because we only estimated the pseudo-value of $\tilde{s}^2$.
Then, Theorem \ref{pseudoUCB} is obtained from these lemmas.
Using the sample means for $m$ and $\tilde{m}$ is
also justified in terms of the order of convergence.
The confidence bounds of the sample mean of the reward
converge to the expected mean at $O(n^{-1/2})$ \citep{auer2002finite}.
Then, we expect that $m$ and $\tilde{m}$ also converge in the same order.
This order is faster than $O(n^{-1/4})$,
which is the order of convergence of $\tilde{s}$ in Lemma \ref{pseudoUCB2}.
In this case, $I(r;\tilde{m},\tilde{s}^2)$ in Lemma \ref{UCB} converges to
$\lim_{n\to\infty}I(r;\tilde{m},\tilde{s}^2)$ at the same order of $\tilde{s}$.
Because of this, it is sufficient to correctly evaluate only $\tilde{s}$.

\subsection{Derivation for Algorithm \ref{selectionIndex}}
Setting $\alpha = \nu^{-c^{2}}$, where $\nu$ is the current number of selections,
we can implement UCB in Theorem \ref{pseudoUCB} as Algorithm \ref{selectionIndex}.
We use $\nu$ instead of $\tau-1$ because of the generality in MCTS.
In this algorithm, $c$ is a hyperparameter
that controls the balance between exploration and exploitation.
We recommend $c = 1/\sqrt{13.613}$
to satisfy the condition, $\gamma < \ln 2$, when $n>\ln\nu$.
To explore the search space more randomly, a larger $c$ should be used.
In the implementation, when the same $z$ value was obtained from other arms,
one of the arms was selected randomly.

The inequality, $n>\ln\nu$, indicates that
our algorithm allocates at least $\ln\nu$ trials for the non-optimal arms.
This allocation order is consistent with the optimal order
for the conventional bandit problem \citep{auer2002finite},
whereas it is inconsistent with the double exponential order,
which is optimal in the MKB with the Gumbel-type reward distribution \citep{cicirello2005max}.
Considering the uncertainty of the MKB problem discussed in Section \ref{sec7},
the optimal allocation order will be explored in future work.
}


\section{Experiments and Results}\label{sec6}
We conduct two types of numerical experiments to compare our algorithm with other algorithms.
One is the synthetic bandit problems with the Gaussian reward distributions,
and the other is SMILES optimization using MCTS
\citep{yang2017chemts,kajita2020autonomous,kikkawa2020self}
as the demonstrations for materials discovery.
We employed a single set of recommended or reasonable hyperparameters
for all the experiments because the tuning of hyperparameters
for the actual applications in materials discovery is extremely expensive.
We set $T=10,000$ considering the realistic applications \citep{kajita2020autonomous,kikkawa2020self}
unless the observed maximum reward clearly does not converge.
We present the details of other algorithms in Appendix \ref{app2}.

\subsection{Synthetic problems for bandits}
The synthetic problems explored in the experiments include the following:
\begin{description}
\item[``easy'' problem]\mbox{}\\
  This problem consists of three arms with the Gaussian parameters
  $(\mu_{1}, \sigma_{1}) = (1,1)$, $(\mu_{2}, \sigma_{2})$ $= (0, 2)$, and $(\mu_{3}, \sigma_{3}) = (-1, 3)$.
  Arm 3 is optimal for the MKB problem because of its large variance.
  However, in the conventional bandit approaches,
  arm 1 is preferred because of its high expectation reward.
\item[``difficult'' problem]\mbox{}\\
  This problem consists of three arms with $(\mu_{1}, \sigma_{1}) = (-0.2, 1.1)$,
  $(\mu_{2}, \sigma_{2}) = (0, 1)$, and $(\mu_{3}, \sigma_{3}) = (-0.8, 1.2)$.
  In this problem, the optimal arm in the MKB problem switches depending on the total number of trials.
  Arm 1 is optimal $10^2 \ll T \ll 10^9$ because $\mu_1+2\sigma_1=\mu_2+2\sigma_2$
  and $\mu_1+6\sigma_1=\mu_3+6\sigma_3$.
  The algorithms for determining the arm with the maximum expectation reward will select arm 2.
  An algorithm with a strong tendency to choose arms
  with high variances will have a higher preference toward arm 3 than arm 1.
  It is a challenge for the MKB algorithm to select arm 1 correctly.
\item[``unfavorable'' problem]\mbox{}\\
  This problem comprises three arms with the same variance;
  the Gaussian parameters of each arm were set to
  $(\mu_{1}, \sigma_{1}) = (1, 1)$, $(\mu_{2}, \sigma_{2}) = (0, 1)$,
  and $(\mu_{3}, \sigma_{3}) = (-1, 1)$, respectively.
  In this setting, arm 1 is optimal.
  the conventional UCB will select the optimal arm correctly
  because this arm has the highest mean reward.
  The MKB algorithms will lose the conventional UCB
  because these algorithms incur costs for estimating the variance of each arm.
\end{description}

The transition plots of the observed maximum and the ratio of the optimal arm selection
averaged over 100 independent runs are shown in Figure \ref{fig1}.
The plots of the observed maximum can directly evaluate the performance of MKB;
however it is susceptible to data variability.
The ratio of the optimal arm selection can help in that case.

In the result of the ``easy'' problem, the MKB algorithms
exhibit higher observed maximum reward than the random search on average.
Although the obtained maximum rewards are similar among these MKB algorithms,
the ratios of the optimal arm selected clearly show that our algorithm identifies the best arm first.
As expected, the conventional UCB mainly selected the non-optimal arm.
The {\it spUCB} and {\it UCBE} also afforded worse results than those of the random search.

In the ``difficult'' problem, the selection ratios show that
the MKB algorithms selected the optimal arm more frequently than the random search,
although slight differences were observed in the observed maximum reward.
In particular, our algorithm was the most efficient in selecting the optimal arm.
The performances of the random search and {\it UCBE} were almost the same,
and {\it spUCB} and the conventional UCB exhibited the worse performances.

In the ``unfavorable'' case, the conventional UCB worked the best
from the viewpoint of the selection ratio.
The performance of {\it spUCB} is similar to that of the conventional UCB.
Our algorithm also exhibited good performance,
although the ratios were slightly lower than those of the conventional UCB.
The results of {\it ThresholdAscent} and {\it RobustUCBMax}
were better than those of {\it UCBE}.
The random search afforded the worst result.

\begin{figure}[htbp]
  \centering
  \begin{tabular}{c}
    \includegraphics[width=15cm]{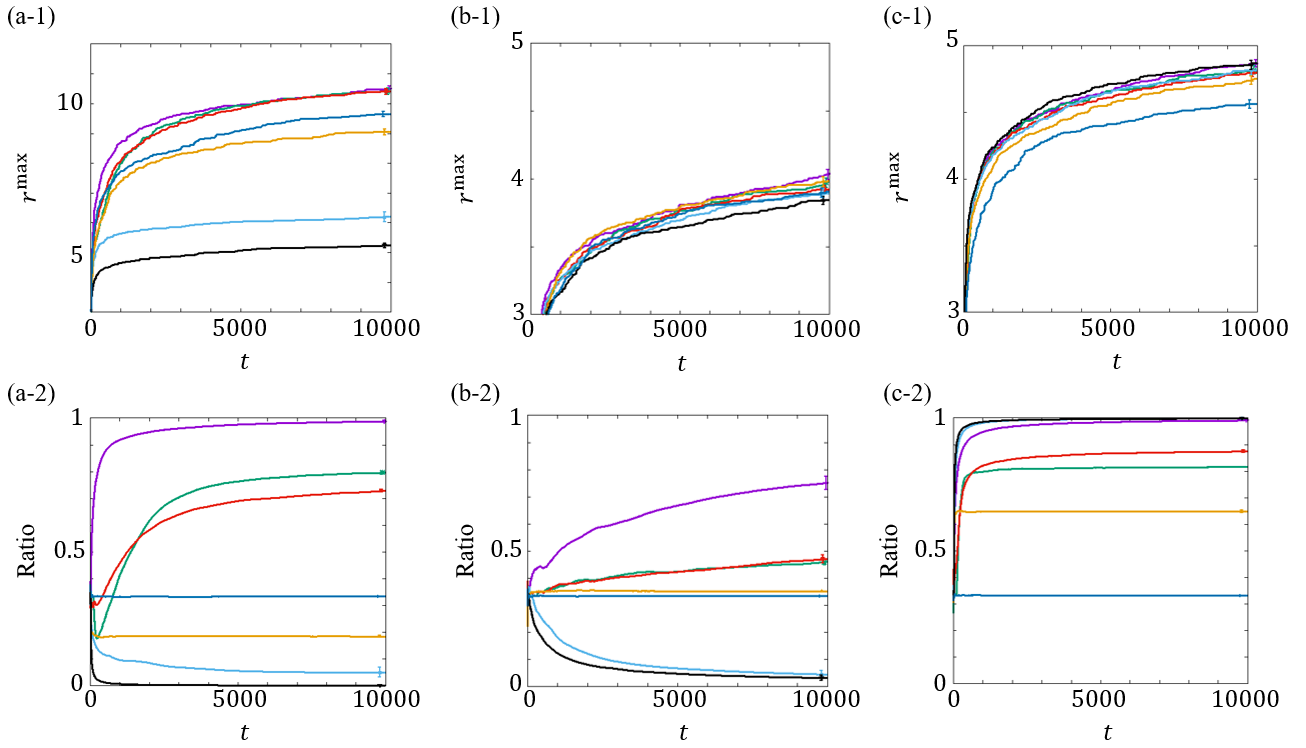}
  \end{tabular}
  \caption{
    Transition plots of ({\it x}-1) the observed maximum
    and ({\it x}-2) the ratio of the optimal arm selection.
    (a) ``Easy'' problem, (b) ``difficult'' problem, and (c) ``unfavorable'' problem.
    The colors represent the following: purple, this work; green, {\it ThresholdAscent};
    red, {\it RobustUCBMax}; sky blue, {\it spUCB}; orange, {\it UCBE};
    black, conventional UCB; and blue, random search.
    The error bars indicate the standard errors of 100 independent runs.
    If the differences between the two methods are more than two times the standard errors, 
    there will be a significant difference between these methods with a 5 \% significance level.
  }\label{fig1}
\end{figure}

\subsection{Molecular discovery using tree search}
As a demonstration of the molecular discovery problem,
we attempted to optimize the molecular structure $M$
which maximized either of the properties
defined by the following empirical equations \citep{joback}:
\begin{equation*}
  T_\text{b}(M) \text{[K]} = 198.2 + \sum_{i \in \text{frag}(M)} T_{\text{b},i},
\end{equation*}
\begin{equation*}
  P_\text{c}(M) \text{[bar]} = \left[ 0.113 + 0.0032N_\text{a}(M) + \sum_{i \in \text{frag}(M)} P_{\text{c},i} \right]^{-2},
\end{equation*}
\begin{equation*}
  \eta_{300 \text{K}}(M) \text{[Pa$\cdot$s]}
  = M_\text{w}(M) \exp{\left[ \frac{\sum_{i \in \text{frag}(M)}\eta_{\text{a},i} - 597.82}{300} + \sum_{i \in \text{frag}(M)}\eta_{\text{b},i} - 11.202 \right] },
\end{equation*}
where $ T_\text{b} $, $ P_\text{c} $, and $ \eta_{300\text{K}} $
are the boiling temperature, critical pressure,
and liquid dynamic viscosity at $300$ K of molecule $M$, respectively;
$\text{frag}(M)$ was a set of atomic fragments of $M$, determined by Joback and Raid.
The fragments simply determined for each atom type,
such as carbon in methyl group, halogens, and ether oxygen in a ring group, etc.
\modifyA{The functions} $N_\text{a}(M)$ and $M_\text{w}(M)$ were the number of atoms in $M$ and molecular weight of $M$, respectively.
The empirical parameters, $T_{\text{b},i}$, $P_{\text{c},i}$, $\eta_{\text{a},i}$,
and $\eta_{\text{b},i}$, were optimized to reproduce the experimental properties.
The properties, $T_\text{b}$, $P_\text{c}$, and $\eta_{300 \text{K}}$,
depended on the molecular structure through these parameters.
In addition to those three properties,
the topological polar surface area $\text{TPSA}(M)$ [$\text{\AA}^2$]
($\text{\AA} = 0.1 \text{nm}$) \citep{ertl} was maximized.
Using these empirical formulas, we can verify
the performance of the search algorithms in a short time.

During the search process, the candidate molecular structures were generated
using the following context-free grammar \citep{hopcroft}
of the SMILES strings \citep{weininger}.
Using the context-free grammar,
we could create a simple maze game \citep{kikkawa2020self} systematically.
Here, we applied the following rules:
\begin{equation*}
  \begin{split}
    S &\rightarrow \text{C($X$)($Y$)($Y$)($Y$)}, \text{C(=O)($Y$)($Y$)},
    \text{C($Y$)C($Y$)(=C($Y$)C($Y$))}, {\rm or} \; \text{C(=O)(O($Y$))($Y$)},\\
    X &\rightarrow \text{[H]}, \text{F}, \text{Cl}, \text{Br}, \text{C($X$)($Y$)($Y$)},
    \text{O($Y$)}, \text{N($Y$)($Y$)}, \text{C(=O)($Y$)},\\
    \phantom{X} & \phantom{\rightarrow, ,} 
    \text{C($Y$)(=C($Y$)($Y$))}, {\rm or} \; \text{C(=O)(O($Y$))},\\
    Y &\rightarrow \text{[H]}, \text{F}, \text{Cl}, \text{Br}, \text{C($X$)($Y$)($Y$)},
    \text{C(=O)($Y$)}, \text{C($Y$)(=C($Y$)($Y$))}, {\rm or} \; \text{C(=O)(O($Y$))},
  \end{split}
\end{equation*}
where $S$, $X$, and $Y$ denote the non-terminal variables,
and the upright characters denote the terminals.
The start variable is set to $S$, and a string-generation process
is completed when the string no longer has variables.
The following additional rule was applied
when the number of alphabets was greater than 40:
\begin{equation*}
  X \; {\rm or} \; Y \rightarrow \text{[H]}.
\end{equation*}
This rule guarantees the termination of the generation process
within the moderate molecular size.
This limit is approximately 500 g/mol in molecular weight,
and most of the known molecules in the database%
\footnote{\url{https://www.rsc.org/Merck-Index/}} are within the limit.
We employed hydrogen as the termination atom, which is commonly used in organic chemistry.
The alphabets include the explicit ``H'', and exclude the parenthesis and equal symbols.
The string ``Br'' and ``Cl'' are considered as two alphabets.
The search space of this molecular generator
contains significantly more than $6.248\times10^{13}$ molecular species,
which is the number of isomers in $\text{C}_{40}\text{H}_{82}$ \citep{yeh1995isomer}.
We did not consider the synthesizability and the target scope of generated molecules;
however, it can be considered by modifying the grammar in practical use.

The context-free language can be projected to a tree graph (Fig.\ref{fig4}).
Therefore, the molecular generator can be easily implemented with an MCTS algorithm,
as shown in Algorithm \ref{mcts}.
The node selection in each layer continues until a complete molecular string is created.
Subsequently, the chemical property evaluation is performed,
after which the property value is used as the reward.
The reward value is recorded in each node passed in the creation,
and it is used to calculate the selection indices in the next creation.
The complete SMILES strings assigned on the different leaves
are treated as the different molecules in this search algorithm
even if these molecules have the same molecular symmetries.
\begin{figure}[htbp]
  \centering
  \begin{tabular}{c}
    \includegraphics[width=9cm]{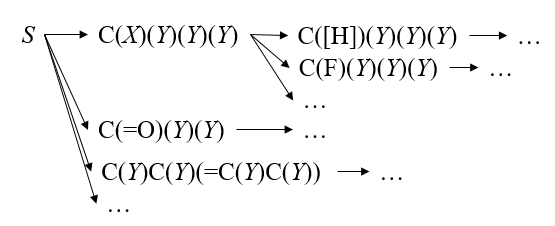}
  \end{tabular}
  \caption{Tree image of SMILES generation.}\label{fig4}
\end{figure}

\begin{algorithm}[h]
  \caption{MCTS}\label{mcts}
  \begin{algorithmic}[1]
    \REQUIRE number of trials $ T $.
    \STATE $ \tau = 0 $
    \WHILE{$ \tau < T $}
    \STATE $\tau \leftarrow \tau + 1$
    \STATE $v \leftarrow root()$
    \COMMENT {the root node of the search tree.}
    \STATE $L \leftarrow \{v\}$
    \WHILE{$v$ is not a leaf node}
    \STATE $k \leftarrow policy(records)$\\
    \COMMENT {$policy$ : Algorithm \ref{maxSearch} or the algorithms in Appendix B.\\
      $records$ : statistic data such as $K$, $n_k$, $R_k$, $R_k^2$,
      and $r^{\max}$ in Algorithm \ref{maxSearch}.}
    \STATE $L \leftarrow L \cup k$
    \STATE $v \leftarrow child(k, v)$
    \COMMENT {the $k$-th child of the node $v$.}
    \ENDWHILE
    \STATE $r \leftarrow reward(L)$
    \COMMENT {the reward of the selected path $L$.}
    \STATE $records.add(L, r)$
    \COMMENT {record the path $L$ and the reward $r$.}
    \ENDWHILE
  \end{algorithmic}
\end{algorithm}

The properties, $T_\text{b}$, $P_\text{c}$, and $\eta_{300\text{K}}$
were calculated using the python thermo.joback module \citep{bell},
and TPSA was calculated using the RDKit library \citep{landrum}.
When using $\eta_{300\text{K}}$ as the reward,
the rules containing one of F, N, and =C were excluded
because their empirical parameters were not available.
Additionally, we note that all of the generated SMILES were valid in the network test of RDKit.

Using the transition plots of the observed maximum,
we compared {\it MaxSearch} and other algorithms in Figure \ref{fig3}.
The plots of $T_\text{b}$, $P_\text{c}$, and TPSA were obtained
by averaging over each 100 independent search runs.
The plots of $\eta_{300\text{K}}$ were medians of the 100 runs with quantile error bars.
Because of the large reward dispersion and skewness of $\eta_{300\text{K}}$,
this treatment was required for the graph readability.
\begin{figure}[htbp]
  \centering
  \begin{tabular}{c}
    \includegraphics[width=12cm]{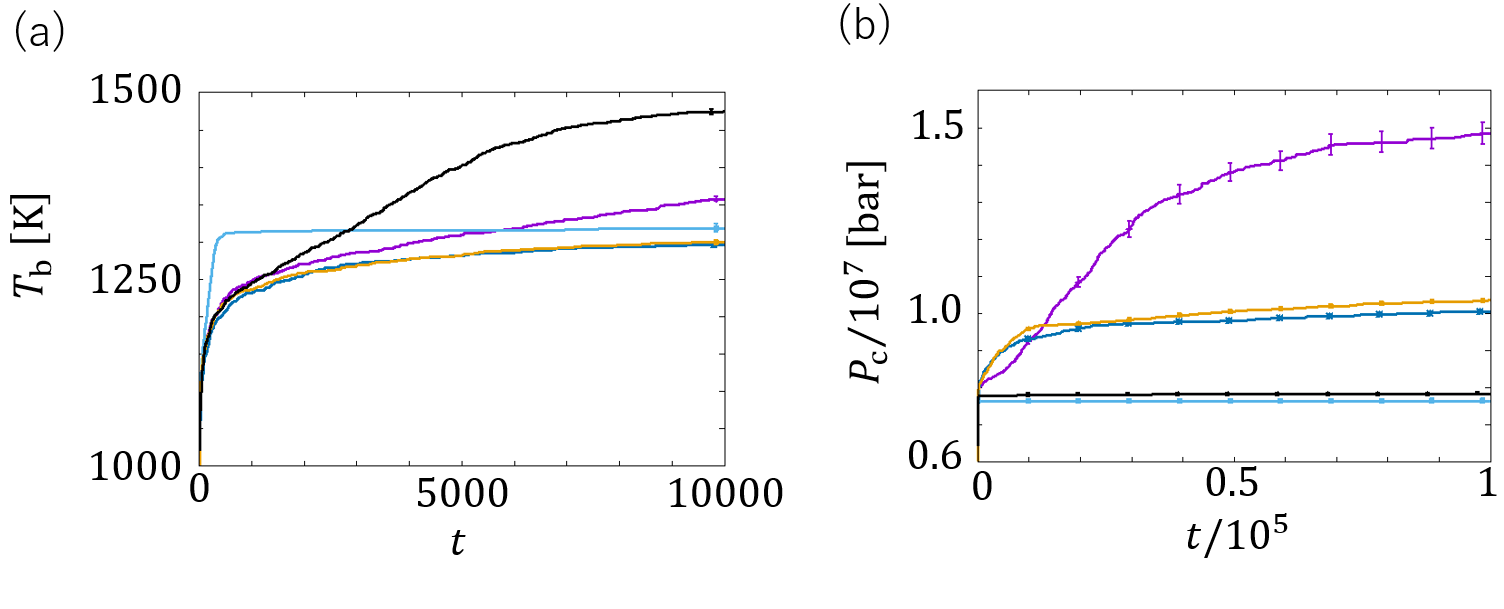}\\
    \includegraphics[width=12cm]{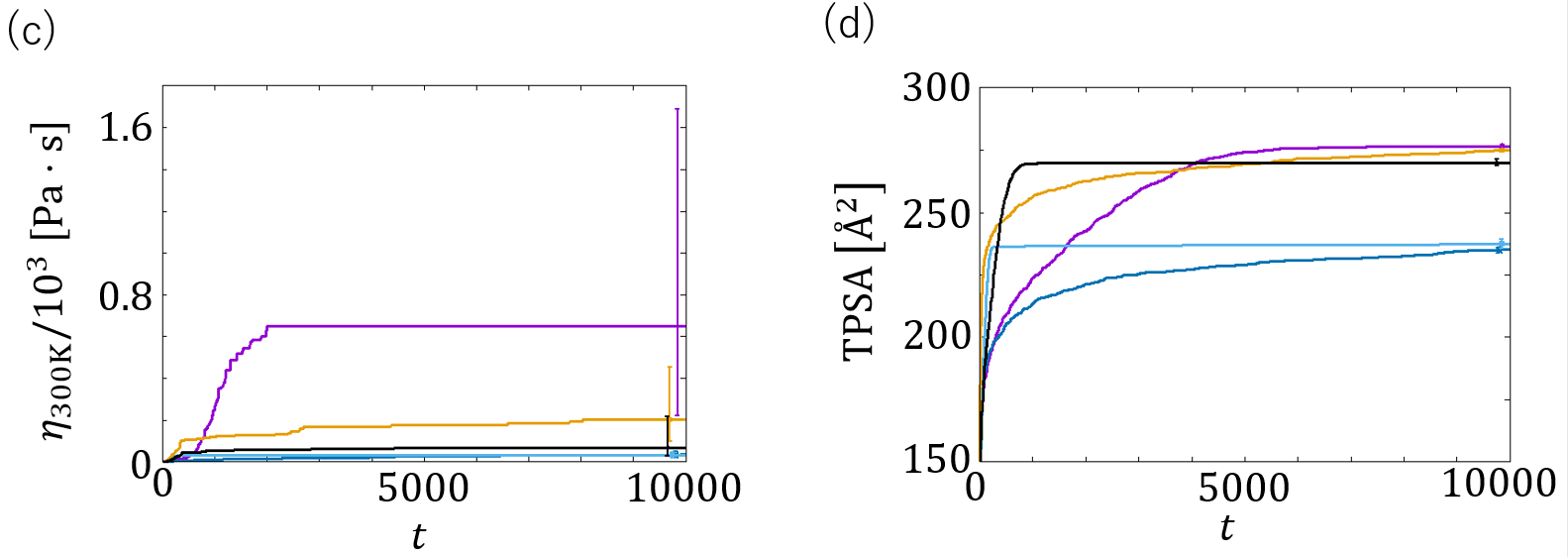}
  \end{tabular}
  \caption{
    Transition plots of the molecular discovery.
    (a) $T_\text{b}$, (b) $P_\text{c}$, (c) $\eta_{300\text{K}}$, and (d) TPSA.
    The other notations are the same as those in Figure \ref{fig1}.
    The colors have the following representation: purple, this work; sky blue, {\it spUCB};
    orange, {\it UCBE}; black, conventional UCB; and blue, random search.
    The error bars indicate the standard errors of the 100 independent runs
    in (a), (b), and (d).
    In (c), the quantiles are displayed instead of the standard errors.
    {\it ThresholdAscent} and {\it RobustUCBMax} cannot be implemented
    in MCTS because of the many hyperparameters involved.
  }\label{fig3}
\end{figure}

In the search of $T_\text{b}$ in Fig. \ref{fig3}(a),
the conventional UCB afforded the highest rewards at $t=10,000$.
This result is expected because the empirical formula of $T_\text{b}$
is a simple sum of the fragment parameters.
In such case, the optimal arm is almost equivalent to the arm with the best expectation reward.
This condition corresponds to the ``unfavorable'' case of synthetic problems.
In fact, the searches for other properties expressed
by simple summation in the Joback method afforded similar results.
For $t<2,500$, {\it spUCB} demonstrated the best performance.
This result is probably due to the exploitative hyperparameters
recommended in the original article \citep{schadd2008single}.
The conventional UCB with a smaller $c$ gave a similar transition plot.
Our algorithm exhibited the second-best performance in the late stage of the search process.
The results of {\it UCBE} and the random search were worse than the above.

In the searches of $P_\text{c}$, $\eta_{300\text{K}}$, and TPSA,
our algorithm demonstrated the best performance in the late stage.
In the early stage of the search processes, {\it UCBE} exchibited better and highly stable performance.
There are some different tendencies in these transition plots.
These differences are probably due to the differences in the population distributions of rewards.
For example, for $\eta_{300K}$, there are chemical structures
with enormously high rewards in the search space.
Our algorithm can find these structures with a high efficiency and success rate.
In contrast, for $\text{TPSA}$, the population distribution
probably has an upper bound near $290 \text{ \AA}^{2}$.
Even if such case, our algorithm worked well.
These results evidence the wide application range of our proposed algorithm.

\modifyA{
Samples of chemical structures with the highest $T_b$, $P_c$, $\eta_{300\text{K}}$,
or $\text{TPSA}$ of each run are shown in Figure \ref{chem}.
We have some understandings for high score molecules:
\begin{itemize}
\item Carboxyl groups are favorable for high $T_b$.
\item Alcohol, carboxyl, and halogen groups are favorable for high viscosity.
\item Polarized oxygen groups are favorable for high TPSA.
\end{itemize}
These understandings are consistent with chemical knowledge.
More complicated and highly optimized structures can be found
in our algorithm than in other algorithms.
\begin{figure}[htbp]
  \centering
  \includegraphics[bb= 0.000000 0.000000 816.864021 305.292614, width=14cm]{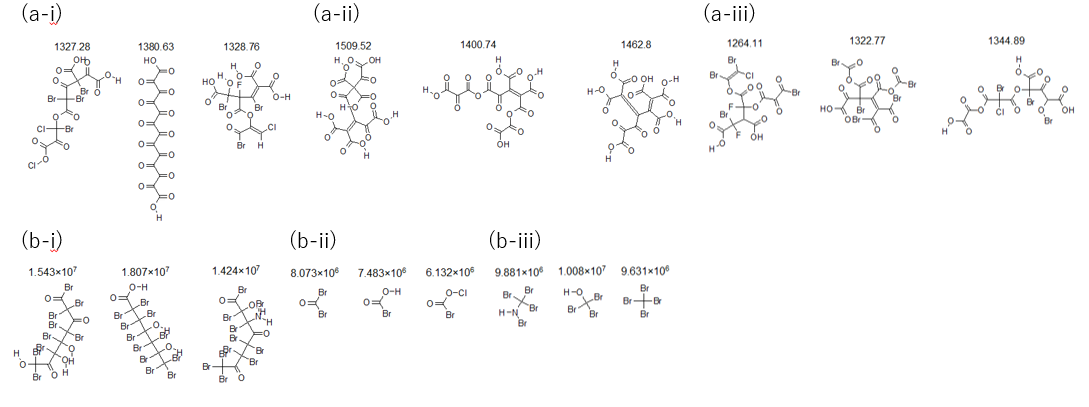}
  \includegraphics[bb= 0.000000 0.000000 820.614544 324.795336, width=14cm]{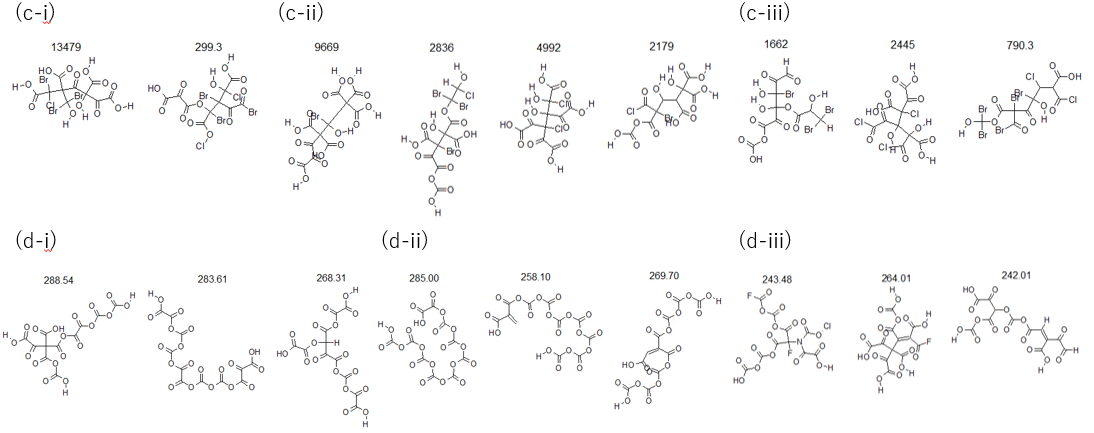}
  \caption{
    Sample of the chemical structures with the highest properties of each run.
    (a) $T_b$, (b) $P_c$, (c) $\eta_{300\text{K}}$, and (d) $\text{TPSA}$.
    (i) This work, (ii) UCB, and (iii) random search.
  }\label{chem}
\end{figure}

\section{Discussion}\label{sec7}
The numerical experiments in the previous section show
that a UCB of EI is suitable for the selection index for the MKB problem.
In this section, we discuss why that is so.
Our discussion would contain nonlogical arguments.
However, we believe that this discussion will help with future work.

\subsection{Subtleties of Extreme Regret}
For our discussion, we should mention the subtleties of extreme regret,
first pointed out by \citet{nishihara2016no}.
In this section, we review these subtleties.

The extreme regret was introduced by \citet{carpentier2014extreme}, defined as follows:
\begin{definition}[Carpentier's Regret]
  In the MKB problem, Carpentier's regret when $k(t), t\in[T]$
  are selected is defined as follows:
  \begin{equation}
    R^{k_*(t),k(t)}_\text{C}(T)
    \coloneqq \mathbb{E}\left[\max_{t\in [T]}r_{k_*(t)}(t)\right]
    - \mathbb{E}\left[\max_{t\in [T]}r_{k(t)}(t)\right],
  \end{equation}
  where $k_*(t), t\in[T]$ denotes an oracle policy.
  The asymptotically optimal policy should satisfy
  \begin{equation}\label{optimal}
    R^{k_*(t),k(t)}_\text{C}(T)=o\left(
      \mathbb{E}\left[\max_{t\in [T]}r_{k_*(t)}(t)\right]
    \right).
  \end{equation}
\end{definition}
A subtlety of Carpentier's regret is that
the regret asymptotically approaches $0$ for most policies in some settings.
For example, we consider all reward distributions of the arms have bounded support.
Then, any policy that selects each arm infinitely often achieves an asymtotically zero regret,
meaning that even the random search is asymptotically optimal in the setting.
It is a serious problem in previously proposed MKB algorithms
because most of them are essentially based on Carpentier's regret.

The problem of Carpentier's regret means that
this regret is unsuitable as an indicator of the asymptotically optimal policy.
To avoid this, \citet{nishihara2016no} defined an alternative regret:
\begin{definition}[Nishihara's Regret]
  In the MKB problem, Nishihara's regret when $k(t), t\in[T']$
  are selected is defined as follows:
  \begin{equation}
    R^{k_*(t),k(t)}_\text{N}(T)
    \coloneqq \frac{1}{T}\min_{T'\geq1}\left\{T':
      \mathbb{E}\left[\max_{t\in [T']}r_{k(t)}(t)\right]
      \geq \mathbb{E}\left[\max_{t\in [T]}r_{k_*(t)}(t)\right]
    \right\},
  \end{equation}
  where $k_*(t), t\in[T]$ denotes an oracle policy.
  The asymptotically optimal policy should satisfy
  \begin{equation}
    \limsup_{T\to\infty}R^{k_*(t),k(t)}_\text{N}(T) \leq 1.
  \end{equation}
\end{definition}
This regret works even when the reward distributions have finite supports.
However, \citet{nishihara2016no} showed that
there is a set of reward distributions such that 
$$\limsup_{T\to\infty}R^{k_*^\text{SA},k(t)}_\text{N}(T) \geq K$$
for any policy, where $k_*^\text{SA}$ is the selection of single-armed oracle
  defined in Definition \ref{singleArmedOracle}.
Namely, no policy is asymptotically optimal under Nishihara's regret.

Another subtlety exists in the definition of the oracle policy.
The previous works are essentially based on the single-armed oracle \citep{nishihara2016no} as follows:
\begin{definition}[Single-Armed Oracle]\label{singleArmedOracle}
  In the MKB problem, the single-armed oracle is the policy that plays the single arm
  \begin{equation}
    k_*^\text{SA}:={\argmax}_{k\in [K]}\mathbb{E}\left[\max_{t\in [T]}r_k(t)\right]
  \end{equation}
   over a time horizon $T$.
\end{definition}
However, this oracle gives different $k_*^\text{SA}$ depending on $T$.
This fact can be confirmed by the following example.
\begin{example}\label{ex1}
Consider the MKB problem with $K=3$.
Let the reward distributions of each arm be
$f_1(r)=\mathcal{N}(r;0,0.01)$, $f_2(r)=\mathcal{N}(r;-1,0.25)$, $f_3(r)=\mathcal{N}(r;-15,4)$.
Then, the single-armed oracle gives $k_*^\text{SA}=1$ if $T \leq 11$,
$k_*^\text{SA}=2$ if $1.4\times10^{11}\leq T \leq 5.4\times10^{13}$,
and $k_*^\text{SA}=3$ if $T \geq 3.9\times10^{202}$.
\end{example}
\begin{proof}
The expected maximum reward sampled from the $k$-th arm over a time horizon $T$
is bounded by
\begin{equation}
\mu_k+\frac{1}{\sqrt{\pi\ln 2}}\sigma_k\sqrt{\ln T} \leq
\mathbb{E}\left[\max_{t\in [T]}r_k(t)\right] \leq \mu_k+\sqrt{2}\sigma_k\sqrt{\ln T},
\end{equation}
where $\mu_k$ and $\sigma^2_k$ are the mean and variance
of the Gaussian reward distribution, $f_k(r)$, respectively \citep{kamath2015bounds}.
Then, the example is established. 
\end{proof}
The $T$-dependency of the single-armed oracle means
that the best arm cannot be determined without information on $T$ \citep{nishihara2016no}.
This raises a question about the regret analysis using the infinity limit of $T$.
In Example \ref{ex1}, arm $3$ should be selected most often
to achieve the asymptotically zero regret.
However, arm $1$ or $2$ is a more suitable choice when $T<5.4\times10^{13}$.
Because many applications cannot perform such a large number of trials,
the regret analysis result is impractical.
The $T$-dependency of the oracle is also seriously inconvenient in MCTS applications.
In an MCTS algorithm, $T$ is not given except for the root node.
Then, one cannot determine the best arm except for the root node
even if the reward distribution is known.
}

\modifyA{
\subsection{$T$-independent oracles and asymptotics of UCB approach}
An oracle independent of $T$ was also proposed  by \citet{nishihara2016no}.
\begin{definition}[Nishihara's Greedy Oracle]
  In the MKB problem, Nishihara's greedy oracle is the policy
  that plays the arm with the maximum EI.
  Namely, this oracle plays
  \begin{equation}
    k_*^\text{N}(\tau):=\argmax_{k\in [K]}\mathbb{E}\left[
      \max\left\{r_k(\tau),r_*^\text{max}(\tau-1)\right\}-r_*^\text{max}(\tau-1)
      | \{r_{k_*^\text{N}(t)}(t)\}_{t\in\tau-1}
    \right]
  \end{equation}
  at time $\tau$, where $r_*^\text{max}(\tau)\coloneqq\max_{t\in[\tau]}r_{k_*^\text{N}}(t)$.
\end{definition}
This oracle uses EI to avoid the dependency on $T$.
Therefore, in terms of Nishihara's greedy oracle, it is natural that we employ EI
to derive a $T$-independent MKB algorithm.  
Although \citet{nishihara2016no} did not analyze this oracle much,
we note that Nishihara's greedy oracle gives different $k_*^\text{N}(\tau)$
depending on the oracle value $r_*^\text{max}(\tau)$ instead of $T$, as shown in the following example:
\begin{example}\label{ex2}
Consider the MKB problem with $K=3$.
Let the reward distributions of each arm be
$f_1(r)=\mathcal{N}(r;0,1)$, $f_2(r)=\mathcal{N}(r;-2,2)$,
$f_3(r)=\mathcal{N}(r;-6,3)$.
Then, Nishihara's greedy oracle gives
$k_*^\text{N}(\tau)=1$ if $r_*^\text{max}(\tau-1) \leq 1.3$,
$k_*^\text{N}(\tau)=2$ if $7.0 \leq r_*^\text{max}(\tau-1) \leq 11.9$,
and $k_*^\text{N}(\tau)=3$ if $r_*^\text{max}(\tau-1) \geq 18.9$,
\end{example}
\begin{proof}
Because of Lemma \ref{EI2}, we should consider the integral of the survival function.
The survival function of $f_k(r)=\mathcal{N}(r;\mu_k,\sigma_k^2)$ is expressed as follows:
\begin{equation}
S(r)\coloneqq\int_r^\infty f_k(u)du
=\frac{1}{2}\erfc\left[\frac{r-\mu_k}{2\sigma_k}\right]
\end{equation}
The bounds of $\erfc(x),x>0$ are given by
\begin{equation}
  c\exp(-\beta x^2)<\erfc(x)<\exp(-x^2),
\end{equation}
where
\begin{equation}
  c=\sqrt{\frac{2e}{\pi}}\frac{\sqrt{\beta-1}}{\beta},
\end{equation}
and $\beta>1$ \citep{chiani2003new,chang2011chernoff}.
Then,
\begin{equation}
  \frac{c}{2}\int_y^\infty\exp(-\beta x^2)dx
  <\frac{1}{2}\int_y^\infty\erfc(x)dx
  <\frac{1}{2}\int_y^\infty\exp(-x^2)dx,
\end{equation}
where $y>0$.
Using the definition and bounds of $\erfc(x)$ again, we obtain
\begin{equation}
  \frac{\sqrt{\pi}c^2}{4\sqrt{\beta}}\exp(-\beta^2 y^2)
  <\frac{1}{2}\int_y^\infty\erfc(x)dx
  <\frac{\sqrt{\pi}}{4}\exp(-y^2).
\end{equation}
These bounds give the example.
\end{proof}
This dependency generates a subtlety in an adaptive case.
Consider one obtains $r^\text{max}(\tau)< 11.9$ under a selections
$\{k(t)\}_{t\in[\tau]}$ in Example \ref{ex2}.
A problem arises when the oracle value $r_*^\text{max}(\tau) > 18.9$ at that time.
In this case, the next selection of Nishihara's greedy oracle
differs from the selection with the maximum EI,
meaning that a policy simply approaching Nishihara's greedy oracle
is not always effective in the MKB problem.

The subtlety due to the dependence on the oracle value $r_*^\text{max}(\tau)$
is solved using the observed value $r^\text{max}(\tau)$ alternatively.
we define this oracle as follows:
\begin{definition}[Kikkawa's Greedy Oracle]
  In the MKB problem, let $k(t), t\in[\tau-1]$ be the previous selections.
  Then, Kikkawa's greedy oracle plays
  \begin{equation}
    k_*^\text{K}(\tau):=\argmax_{k\in [K]}\mathbb{E}\left[
      \max\left\{r_k(\tau),r^\text{max}(\tau-1)\right\}-r^\text{max}(\tau-1)
      | \{r_{k(t)}(t)\}_{t\in\tau-1}
    \right]
  \end{equation}
  at time $\tau$, where $r^\text{max}(\tau)\coloneqq\max_{t\in[\tau]}r_{k(t)}(t)$.
\end{definition}
This oracle is equivalent to Nishihara's greedy oracle
when all selections follow this oracle.
In addition, this oracle gives the arm that has the maximum EI
even when the non-oracle selections exist in $k(t),t\in[\tau-1]$.
The following proposition states that Kikkawa's greedy oracle
asymptotically approaches Nishihara's greedy oracle in terms of Carpentier's regret.
\begin{proposition}[Asymptotics of Kikkawa's Greedy Oracle]\label{asymptotics}
Let $k(t),t\in[T]$ contain $o(T)$ non-oracle selections
and other selections follow Kikkawa's greedy oracle.
Then,
  \begin{equation}
    R^{k_*^N(t),k(t)}_\text{C}(T)=o\left(
      \mathbb{E}\left[\max_{t\in [T]}r_{k_*^N(t)}(t)\right]
    \right),
  \end{equation}
when 
  \begin{equation}
    \mathbb{E}\left[\max_{t\in [T]}r_{k_*^N(t)}(t)\right] \geq O(1)
    \ \text{and}\  EI\left[k,T;G^\text{max}\right] \leq O(T^{-1}).
  \end{equation}
\end{proposition}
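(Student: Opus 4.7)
The plan is to express both expected maxima via the telescoping identity
\begin{equation*}
\mathbb{E}\left[\max_{t\in[T]} r_{k(t)}(t)\right] = \mathbb{E}[r_{k(1)}(1)] + \sum_{t=2}^T \mathbb{E}\left[EI[k(t), t; G^\text{max}]\right],
\end{equation*}
which follows by iterating Definition 4 together with Lemma \ref{EI1}, and then write Carpentier's regret as a difference of per-step expected EIs evaluated along the two trajectories. I would partition Kikkawa's selections into the $T - o(T)$ oracle steps (on which $k(t)$ is by construction an EI-maximizer with respect to the observed $r^\text{max}(t-1)$) and the $o(T)$ non-oracle steps, and bound each class's contribution separately.

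First I would establish a stochastic-dominance coupling of the two running-maxima processes. By Lemma \ref{EI2} the one-step EI equals $\int_{r^\text{max}}^\infty S_k(u)\,du$, which is non-increasing in $r^\text{max}$, so whenever Kikkawa's running maximum lags Nishihara's (because of earlier non-oracle losses), Kikkawa's subsequent per-step EIs are correspondingly \emph{larger}. An inductive argument comparing the two maxima would then show that the per-step EI gap on oracle steps is non-positive in expectation, so the oracle-step contribution to the regret is bounded above by zero up to lower-order terms — a self-correcting effect inherent to Kikkawa's greedy criterion.

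Next I would bound the $o(T)$ non-oracle steps. The per-step loss at a non-oracle time $t$ is at most $\max_k \int_{r^\text{max}(t-1)}^\infty S_k(u)\,du$. Combining the non-increasing nature of this quantity in $r^\text{max}$ (hence in $t$) with the hypothesis $EI[k,T;G^\text{max}] \leq O(T^{-1})$ and the coupling of the previous paragraph, the effective per-step non-oracle cost along Kikkawa's trajectory is $O(T^{-1})$, so the total non-oracle loss is $o(T)\cdot O(T^{-1}) = o(1)$. Since $\mathbb{E}[\max_{t\in[T]} r_{k_*^N(t)}(t)] \geq O(1)$ by hypothesis, this yields $R^{k_*^N(t),k(t)}_\text{C}(T) = o\bigl(\mathbb{E}[\max_{t\in[T]}r_{k_*^N(t)}(t)]\bigr)$.

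The hard part will be the coupling argument itself: because Kikkawa and Nishihara see different realized histories, their per-step EIs are evaluated at different $r^\text{max}$ thresholds, and the claim that oracle-step discrepancies cancel rather than accumulate — and that non-oracle losses do not concentrate at early times where EI is still large — requires a careful stochastic-dominance analysis of the two running-maxima processes. Making the self-correcting mechanism quantitative, and ruling out pathological interleavings of non-oracle steps that would defeat it, is where I expect the bulk of the technical effort to lie.
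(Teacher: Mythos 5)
Your peripheral arithmetic matches the paper's: $o(T)$ problematic selections, each costing at most $O(T^{-1})$ in expected improvement, gives a total loss of $o(1)$, which is $o(\mathbb{E}[\max_{t\in[T]}r_{k_*^N(t)}(t)])$ because the denominator is $\Omega(1)$ by hypothesis. But the step you yourself flag as "the hard part" --- the stochastic-dominance coupling showing that the oracle-step EI discrepancies self-correct rather than accumulate --- is not carried out, and it is genuinely the crux of your route. Monotonicity of $\int_{r^{\max}}^\infty S_k(u)\,du$ in $r^{\max}$ tells you a \emph{fixed} arm's EI is larger when the running maximum lags, but the two greedy oracles condition on different histories and can therefore commit to different \emph{arms} from the first discrepancy onward; a per-step comparison of conditional EIs does not obviously telescope into a comparison of the two expected maxima. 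As written, the proposal is a plan with its central lemma unproven.

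The paper's proof avoids this difficulty entirely by exploiting the time-independence of the reward distributions: since the rewards from arm $k$ are i.i.d. regardless of when they are drawn, $\mathbb{E}\left[\max_{t\in[T]}r_{k(t)}(t)\right]$ is a function only of the pull counts $n_k(T)$, $k\in[K]$, not of the order or the adaptive rule that produced them. Writing $n^{\min}=\min\{n_k^{\mathrm{N}}(T),n_k(T)\}$ and $\delta n = |n_k^{\mathrm{N}}(T)-n_k(T)|$, the regret is bounded by the expected gain from appending $\delta n$ extra i.i.d. samples of arm $k$ to a block of $n^{\min}$ samples, summed over $k$; each extra sample contributes at most $EI[k,n^{\min};G^{\max}]=O(T^{-1})$, so the total is $\sum_k o(T)\cdot O(T^{-1})=o(1)$. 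All of the coupling and ordering issues you anticipate simply disappear. What remains --- and what the paper itself only asserts, relegating "pathological conditions" to a footnote --- is that $\delta n=o(T)$ for each arm, i.e.\ that $o(T)$ non-oracle selections perturb the count vector by only $o(T)$. So the residual difficulty in both arguments is of the same nature, but the paper's exchangeability reduction localizes it to a single combinatorial claim about counts instead of a trajectory-level stochastic-dominance analysis. If you want to salvage your route, I would recommend adopting that reduction rather than attempting the coupling.
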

\begin{proof}
Let $n_k(T),k\in[K]$ and $n_k^\text{N}(T),k\in[K]$ be the numbers of the $k$-th arm selected
in $k(t),t\in[T]$ and $k_*^\text{N}(t),t\in[T]$, respectively.
Then, $\delta n \coloneqq \left|n_k^\text{N}(T)-n_k(T)\right|=o(T)$ is expected.%
\footnote{Pathological conditions may exist. However, we do not consider them.}
Therefore,
\begin{equation}
  \begin{split}
  \left|R^{k_*^\text{N}(t),k(t)}_\text{C}(T)\right|
  & = \left|\mathbb{E}\left[\max_{k\in[K]}\max_{n\in[n_k^\text{N}(T)]}r_k(n)\right]
    - \mathbb{E}\left[\max_{k\in[K]}\max_{n\in[n_k(T)]}r_k(n)\right]\right| \\
  & = \left|\mathbb{E}\left[\max_{k\in[K]}\left[\max\left\{
    \max_{n\in[n^\text{min}]}r_k(n), \max_{n\in[\delta n]}r_k(n+n^\text{min})
    \right\} - \max_{n\in[n^\text{min}]}r_k(n) \right]\right]\right| \\
  & \leq \sum_{k\in[K]}\left|\mathbb{E}\left[\max\left\{
    \max_{n\in[n^\text{min}]}r_k(n), \max_{n\in[o(T)]}r_k(n+n^\text{min})
    \right\} - \max_{n\in[n^\text{min}]}r_k(n) \right]\right| \\
  & \leq \sum_{k\in[K]}\left|o(T)EI\left[k,n^\text{min};G^\text{max}\right]\right| = o(1)
  \leq O\left(\mathbb{E}\left[\max_{t\in [T]}r_{k_*^\text{N}(t)}(t)\right]\right),
  \end{split}
\end{equation}
where $n^\text{min}\coloneqq\min\{n_k^\text{N}(T),n_k(T)\}$.
\end{proof}
The proposition states that $o(T)$ mistakes are allowed in the asymptotically optimal policy
under the condition related to the maximum value.
This condition can be satisfied by Gaussian distributions at least.

Our concept in Section \ref{sec4} can be obtained by simply substituting
the EI in Kikkawa's greedy oracle into its UCB.
Therefore, our conceptual algorithm is expected
to approach Kikkawa's greedy oracle for large $T$.
The number of non-oracle selections in the algorithm can be estimated as follows:
\begin{proposition}[Number of Non-Oracle Selections]\label{nonOracleSelections}
Consider the MKB problem.
Let $\overline{EI}\left[k,\mathcal{R}(t-1);G^\text{sum}\right],t\in[T]$
be an estimator of $EI\left[k,t;G^\text{max}\right]$.
Suppose a confidence interval between $EI\left[k,t;G^\text{max}\right]$
and $\overline{EI}\left[k,\mathcal{R}(t-1);G^\text{sum}\right]$ is known as
\begin{equation}
  \left|EI\left[k,t;G^\text{max}\right]
  -\overline{EI}\left[k,\mathcal{R}(t-1);G^\text{sum}\right]\right|\leq C(k,\alpha(t),n_k(t)),
\end{equation}
with confidence level $1-\alpha(t)$,
where $n_k(t),k\in[K]$ be the numbers of the $k$-th arm selected
under Algorithm \ref{maxSearch} with this confidence interval.
Then, the number of non-oracle selections becomes $o(T)$
when $\alpha(t)=o(1)$ and $C(k,\alpha(t),n_k(t))=o(\{t/n_k(t)\}^d)$
where $d>0$.
\end{proposition}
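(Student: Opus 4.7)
The plan is to mirror the classical UCB regret analysis, with Kikkawa's greedy oracle playing the role of the best fixed arm. Define the instantaneous gap
\begin{equation*}
\Delta_k(t) \coloneqq EI\left[k_*^\text{K}(t),t;G^\text{max}\right] - EI\left[k,t;G^\text{max}\right],
\end{equation*}
so that a non-oracle selection at time $t$ is exactly the event $\{\hat{k}(t) \neq k_*^\text{K}(t)\}$, equivalently $\{\Delta_{\hat{k}(t)}(t) > 0\}$. The target estimate is then $\sum_{t=1}^{T}\mathbbm{1}[\Delta_{\hat{k}(t)}(t)>0] = o(T)$ in expectation.

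First I would isolate the deterministic condition under which a non-oracle pull can occur. If $\hat{k}(t) = k \neq k_*^\text{K}(t)$ then, by the $\argmax$ rule of Algorithm \ref{maxSearch},
\begin{equation*}
\overline{EI}\left[k,\mathcal{R}(t-1);G^\text{max}\right] + C(k,\alpha(t),n_k(t)) \geq \overline{EI}\left[k_*^\text{K}(t),\mathcal{R}(t-1);G^\text{max}\right] + C(k_*^\text{K}(t),\alpha(t),n_{k_*^\text{K}(t)}(t)).
\end{equation*}
On the event that the two confidence intervals hold simultaneously — an event of probability at least $1-2\alpha(t)$ by the union bound — this chains into the familiar UCB pigeonhole inequality $2C(k,\alpha(t),n_k(t)) \geq \Delta_k(t)$. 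The hypothesis $C(k,\alpha(t),n_k(t))=o((t/n_k(t))^d)$ with $d>0$ then forces $n_k(t)/t$ to be arbitrarily small whenever $\Delta_k(t)$ stays away from zero: for every $\epsilon>0$ and $t$ sufficiently large, $\Delta_k(t)/2 \leq C \leq \epsilon(t/n_k(t))^d$, so $n_k(t) \leq t\,(2\epsilon/\Delta_k(t))^{1/d}$. Summing this estimate over the finitely many suboptimal arms shows that the ``good-event'' contribution to the non-oracle count is $o(T)$. The ``bad-event'' contribution is at most $\sum_{t=1}^{T}2\alpha(t)$; since $\alpha(t)=o(1)$, Cesaro's theorem yields $\sum_{t=1}^{T}\alpha(t) = o(T)$ as well. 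Adding the two pieces gives the desired bound.

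The main obstacle is that Kikkawa's oracle $k_*^\text{K}(t)$, and therefore the gap $\Delta_k(t)$, is random and time-varying through its dependence on $r^\text{max}(t-1)$; in particular $\Delta_k(t)$ is not automatically bounded below by a positive constant, because as $r^\text{max}$ grows every EI shrinks and gaps can close along the trajectory. To push the counting argument through one must either impose a uniform positive floor on $\Delta_k(t)$ on the event of a non-oracle selection (perhaps folded into the hypotheses), or refine the bookkeeping so that the polynomial-rate decay of $C$ still dominates a shrinking $\Delta_k(t)$. A secondary subtlety is interpreting the $o((t/n_k(t))^d)$ condition uniformly in the random pair $(t,n_k(t))$, which in effect requires the confidence radius to shrink faster than any constant multiple of $(t/n_k(t))^d$ along every admissible growth trajectory of $n_k(t)$.
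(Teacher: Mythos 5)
Your proposal follows essentially the same route as the paper's proof: the same good-event/bad-event decomposition with the union bound giving a $K\sum_t\alpha(t)=o(T)$ contribution from confidence-interval failures, the same pigeonhole chain $\Delta_{\kappa}(t)\leq 2C(\kappa,\alpha(t),n_{\kappa}(t))$ on the good event, and the same inversion via $C=o(\{t/n_k(t)\}^d)$ to bound $n_{\kappa}(t)$ by $\Delta_{\kappa}(t)^{-1/d}o(t)$. The obstacle you flag at the end --- that $\Delta_k(t)$ is random, time-varying through $r^\text{max}(t-1)$, and not bounded below, so the final summation step is not fully justified --- is a genuine subtlety, but it is equally present and equally unaddressed in the paper's own proof, so your write-up is if anything more candid about the argument's limits.
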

\begin{proof}
Consider the following events:
\begin{equation}\label{event1}
  \begin{split}
  &A_{k_*^\text{K}(t),t}: \\
  &\ \ \overline{EI}\left[k_*^\text{K}(t), \mathcal{R}(t-1);G^\text{sum}\right]
  \geq EI\left[
    k_*^\text{K}(t),t;G^\text{max}
  \right] - C\left(k_*^\text{K}(t),\alpha(t),n_{k_*^\text{K}(t)}(t)\right).
  \end{split}
\end{equation}
\begin{equation}\label{event2}
  A_{\kappa(t),t}:
  \overline{EI}\left[\kappa(t),\mathcal{R}(t-1);G^\text{sum}\right]
  \leq EI\left[\kappa(t),t;G^\text{max}\right] + C\left(
    \kappa(t),\alpha(t),n_{\kappa(t)}(t)
  \right),
\end{equation}
where $\kappa(t)\in[K]/k_*^\text{K}(t)$.
Then, the number of complementary cases can easily be counted as follows:
\begin{equation}\label{complementary}
  \sum_{t\in[T]}\mathbb{E}\left[\mathbbm{1}\left[
    \bigcup_{k\in[K]}A_{k,t}^c
  \right]\right] \leq K\sum_{t\in[T]}\alpha(t) = o(T).
\end{equation}
Conversely, in the case of all $A_{k,t}$ established,
the non-oracle arm is selected when
\begin{equation}\label{nonOracle}
  \begin{split}
  \overline{EI}\left[\kappa(t),\mathcal{R}(t-1);G^\text{sum}\right]
    & + C\left(\kappa(t),\alpha(t),n_{\kappa(t)}(t)\right) \\
  & \geq \overline{EI}\left[
    k_*^\text{K}(t),\mathcal{R}(t-1);G^\text{sum}
  \right] + C\left(k_*^\text{K}(t),\alpha(t),n_{\kappa(t)}(t)\right),
  \end{split}
\end{equation}
for any $\kappa(t)$.
Then,
\begin{equation}
\begin{split}
  EI\left[k_*^\text{K}(t),t;G^\text{max}\right]
  & \leq \overline{EI}\left[
    k_*^\text{K}(t),\mathcal{R}(t-1);G^\text{sum}
  \right] + C(k_*^\text{AG}(t),\alpha(t),n_{k_*^\text{K}(t)}(t)) \\
  & \leq \overline{EI}\left[\kappa(t),\mathcal{R}(t-1);G^\text{sum}\right]
  + C(\kappa(t),\alpha(t),n_{\kappa(t)}(t)) \\
  & \leq EI\left[\kappa(t),t;G^\text{max}\right] + 2C(\kappa(t),\alpha(t),n_{\kappa(t)}(t)).
\end{split}
\end{equation}
Equations (\ref{event1}), (\ref{event2}), and (\ref{nonOracle})
are used in the first, second, and third inequalities, respectively.
Then, solving for $n_{\kappa(t)}(t)$ using 
$C(k,\alpha(t),n_{\kappa(t)}(t))=o(\{t/n_k(t)\}^d)$,
we obtain
\begin{equation}\label{nk}
  n_{\kappa(t)}(t) \leq 2^{1/d} \Delta_{\kappa(t)}(t)^{-1/d}o(t),
\end{equation}
where
\begin{equation}
\Delta_k(t)=EI\left[
  k_*^\text{K}(t),t;G^\text{max}\right]-EI\left[k,t;G^\text{max}
\right].
\end{equation}
Using Equations (\ref{complementary}) and (\ref{nk}), we obtain the proposition.
\end{proof}
This proposition means that Algorithm \ref{maxSearch} 
asymptotically approaches Kikkawa's greedy oracle
in terms of the number of non-oracle selections.
Considering Proposition \ref{asymptotics}, Algorithm \ref{maxSearch}
is also an asymptotically optimal policy
in terms of Nishihara's greedy oracle and Carpentier's regret.

Notably, the proof of Proposition \ref{nonOracleSelections}
is an analog of the proof for the conventional bandit problem
\citep{auer2002finite,Jamieson2018lecture}
except for the optimal arm depending on the time $t$.
This treatment can be allowed because the selections by Kikkawa's greedy oracle
correspond to the arms with the maximum EI for any $t$.
This feature of Kikkawa's greedy oracle is strong. 
We are sure that several other proofs for the conventional bandit problem
will be established formally in the MKB problem using Kikkawa's greedy oracle.
}

\section{Conclusion}\label{sec8}
Here, we proposed an MKB algorithm and applied it
to synthetic problems and molecular-design demonstrations
using MCTS for materials discovery.
The proposed algorithm only uses one hyperparameter and is easy to implement for MCTS.
This feature gives the proposed algorithm an advantage over other MKB algorithms
\citep{carpentier2014extreme, david2016pac, streeter2006simple,
achab2017max, streeter2006asymptotically},
and enables its application for materials discovery.
In fact, to the best of our knowledge,
this is the first case where the MKB algorithm is actually employed for materials discovery.
The performance of the proposed algorithm was examined
using the synthetic problems and the molecular-structure optimizations.
The experimental results demonstrated that the proposed algorithm
found the maximum reward more efficiently than other algorithms
when the optimal arm could not be determined only based on the expectation reward.
In real molecular designs, most of the molecular properties would have a high complexity;
thus, we believe that the proposed algorithm is useful for these tasks.

\modifyA{In the theoretical aspect, we mainly contribute in two aspects.
One is the proof of the effectiveness of the use of a UCB of EI.
The proof result has wide flexibility and using this,
other algorithms can be proposed with other assumptions for variables,
which will be addressed in future work.
The other is the proposal of Kikkawa's greedy oracle.
Using the proposed oracle, we can avoid many of the subtleties of the MKB problem.

Although we do not treat in this study,
}%
heuristics to reduce the required trials are also important for actual use.
\modifyA{For example,} the combination of the algorithm with {\it UCBE}
may present a strategy for reducing the required trials.
In addition, a combination with a supervised learning approach holds significant promise.
The application of the proposed algorithm and the derivation into other areas
are also important aspects that require further investigation.

\acks{
  We wish to thank Dr. Ryosuke Jinnouchi in TCRDL for reviewing our early draft.
}

\appendix
\section{Proof of Theorem \ref{Bernstein}}\label{app1}
Let $X_{i}$ be independent random variables drawn
from the same sub-exponential $g(X)$ with parameter $b > 0$.
Then,
\begin{equation}
  \begin{split}
    &\mathbb{P}\left\{ \left|
    \frac{1}{n} \sum_{i=1}^{n} X_{i} - \mathbb{E}_{g(X)}\left[ X \right]
    \right| \geq u \right\}\\
    &= \mathbb{P}\left\{
    \frac{1}{n}\sum_{i=1}^nX_i - \mathbb{E}_{g(X)} \geq u \; \text{or} \;
    \mathbb{E}_{g(X)} - \mathbb{P}\left\{\frac{1}{n}\sum_{i=1}^nX_i\right\} \geq u
    \right\},
  \end{split}
\end{equation}
where $u > 0$.
Therefore, in the former case,
\begin{equation}
  \begin{split}
    &\mathbb{P}\left\{
    \frac{1}{n} \sum_{i=1}^{n} X_{i} - \mathbb{E}_{g(X)}\left[ X \right] \geq u
    \right\}\\
    &= \mathbb{P}\left\{
    \exp{ \left( \frac{\lambda}{n} \sum_{i=1}^{n} X_{i} \right) }
    \geq \exp{ \left[ \lambda (u+\mathbb{E}_{g(X)}\left[ X \right]) \right] }
    \right\}\\
    &\leq \exp{ \left[- \lambda (u+\mathbb{E}_{g(X)}\left[ X \right]) \right] }
    \mathbb{E}\left[
      \exp{ \left( \frac{\lambda}{n} \sum_{i=1}^{n} X_{i} \right) }
      \right], \; \; \; \langle{\rm Markov's} \; {\rm inequality}\rangle
  \end{split}
\end{equation}
where $\lambda>0$ is an arbitrary parameter.
Since the random variables $X_{i}$ are independent of each other,
their moment-generating function can be separated.
Thus, we obtain
\begin{equation}
  \begin{split}
    &\mathbb{P}\left\{
    \frac{1}{n} \sum_{i=1}^{n} X_{i} - \mathbb{E}_{g(X)}\left[ X \right] \geq u
    \right\}\\
    &\leq \exp{ \left[ -\lambda (u+\mathbb{E}_{g(X)}\left[ X \right]) \right] }
    \left\{
    \mathbb{E}_{g(X)}\left[ \exp{ \left( \frac{\lambda X}{n} \right) } \right]
    \right\}^{n}\\
    &= \exp{ \left[- \lambda (u+\mathbb{E}_{g(X)}\left[ X \right]) \right] }
    \left(
    1 + \frac{ \lambda \mathbb{E}_{g(X)}\left[ X \right] }{n}
    + \sum_{p=2}^{\infty} \frac{ \lambda^{p} \mathbb{E}_{g(X)}\left[X^{p}\right]}{n^{p}p!}
    \right)^{n}\\
    &\leq \exp{ \left[ - \lambda u + \sum_{p=2}^{\infty}
        \frac{\lambda^{p} \mathbb{E}_{g(X)}\left[X^{p}\right]}{n^{p-1}p!}
        \right] } \; \; \; \langle{\rm Since}\, 1 + x \leq \exp{x}\rangle\\
    &= \exp{ \left[
        - \lambda u + \sum_{p=2}^{\infty} \frac{\lambda^{p}}{n^{p-1}p!}
        \int_{0}^{\infty} \mathbb{P}_{g(X)}\{ X^{p} \geq u \} \; du
        \right] } \; \; \; \langle{\rm Integral} \; {\rm identity}\rangle\\
    &= \exp{ \left[
        - \lambda u + \sum_{p=2}^{\infty} \frac{\lambda^{p}}{n^{p-1}p!}
        \int_{0}^{\infty} \mathbb{P}_{g(X)}\{ X \geq bv \}pb^{p}v^{p-1} \; dv
        \right] } \; \; \; \langle{\rm Replace} \; u \; {\rm with} \; b^{p}v^{p}\rangle\\
    &\leq \exp{ \left[ - \lambda u + 2 \sum_{p=2}^{\infty}
        \frac{\lambda^{p}b^{p}}{n^{p-1}(p-1)!} \int_{0}^{\infty} e^{-v} v^{p-1} \; dv
        \right] }. \; \; \; \langle{\rm Sub\mathchar`-exponential}\rangle
  \end{split}
\end{equation}
The above integral corresponds to the Gamma function. Therefore,
\begin{equation}
  \begin{split}
    \mathbb{P}\left\{
    \frac{1}{n} \sum_{i=1}^{n} X_{i} - \mathbb{E}_{g(X)}\left[ X \right] \geq u
    \right\}
    &\leq \exp{ \left[ -\lambda u + 2 \sum_{p=2}^{\infty}
        \frac{ b^{p} \lambda^{p} \Gamma(p) }{ n^{p-1} (p-1)! } \right] }\\
    &= \exp{ \left[ -\lambda u + 2 \sum_{p=2}^{\infty}
        \frac{ b^{p} \lambda^{p} }{ n^{p-1} } \right] }\\
    &= \exp{ \left[ -\lambda u + \frac{ 2 b^{2} \lambda^{2} }{ n - b\lambda } \right] },
  \end{split}
\end{equation}
where $\left| b\lambda/n \right| < 1$.
Replacing $ n - b \lambda $ with $ \xi_{+} $, we obtain
\begin{equation}
  \mathbb{P}\left\{
  \frac{1}{n} \sum_{i=1}^{n} X_{i} - \mathbb{E}_{g(X)}\left[ X \right] \geq u
  \right\} \leq \exp{ \left[
      \left( \frac{u}{b} + 2 \right) \xi_{+} + \frac{2n^{2}}{\xi_{+}} - \frac{nu}{b} - 4n \right] },
\end{equation}
where $ 0 < \xi_{+} < 2n $.
Hence, the optimized $ \xi_{+} $ is
\begin{equation}
  \xi_{+} = \sqrt{ \frac{2n^{2}b}{2b + u} }.
\end{equation}
Then,
\begin{equation}\label{eq-a1}
  \mathbb{P}\left\{
  \frac{1}{n} \sum_{i=1}^{n} X_{i} - \mathbb{E}_{g(X)}\left[ X \right] \geq u \right\} \leq \exp{ \left( 2\sqrt{2} n w_{+} - n w_{+}^{2} - 2n \right) },
\end{equation}
where $ w_{+} \coloneqq \sqrt{ ub^{-1} + 2 } \geq \sqrt{2} $.
Moreover, using the same approach, we obtain
\begin{equation}
  \mathbb{P}\left\{
  \mathbb{E}_{g(X)}\left[ X \right] - \frac{1}{n} \sum_{i=1}^{n} X_{i} \geq u \right\} \leq \exp{ \left[ \left(-\frac{u}{b} + 2 \right) \xi_{-} + \frac{2n^{2}}{\xi_{-}} + \frac{nu}{b} - 4n \right] },
\end{equation}
where $ \xi_{-} \coloneqq n + b \lambda $ and $ 0 < \xi_{-} < 2n $.
Hence, the optimized $ \xi_{-} $ is
\begin{equation}
  \xi_{-} = \left\{ \begin{array}{ll}
    \sqrt{ 2n^{2}b/(2b-u) } & 0 \leq u < 3b/2\\
    2n - \epsilon & 3b/2 \leq u,
  \end{array} \right.
\end{equation}
where $\epsilon$ is an infinitesimal.
Then, we obtain
\begin{equation}\label{eq-a2}
  \mathbb{P}\left\{
  \mathbb{E}_{g(X)}\left[ X \right] - \frac{1}{n} \sum_{i=1}^{n} X_{i} \geq u
  \right\} \leq \left\{
  \begin{array}{ll}
    \exp{ (2\sqrt{2} nw_{-} - n w_{-}^{2} - 2n) } & 0 \leq u < 3b/2\\
    \exp{ \left[
        \left(- ub^{-1} + 1 \right) n + O(\epsilon)
        \right] } & 3b/2 \leq u,
  \end{array} \right.
\end{equation}
where $w_{-} \coloneqq \sqrt{ -ub^{-1} + 2 } > 1/\sqrt{2}$.
Equations \ref{eq-a1} and \ref{eq-a2} show that
$\mathbb{E}\left [ X \right]$ is bounded at a confidence level of $1-\alpha>0$ as follows:
\begin{equation}
  \frac{1}{n} \sum_{i=1}^{n} X_{i} - u_{+}^{*} \leq \mathbb{E}\left[ X \right] \leq \frac{1}{n} \sum_{i=1}^{n} X_{i} + u_{-}^{*},
\end{equation}
where
\begin{equation}\label{eq-a3}
  \alpha = \exp{ \left[ 2\sqrt{2} n w_{+}^{*} - n (w_{+}^{*})^{2} - 2n \right] },
\end{equation}
when $ u_{+}^{*} \geq 0 $, and
\begin{equation}\label{eq-a4}
  \alpha = \left\{ \begin{array}{ll}
    \exp{ \left[ 2\sqrt{2}nw_{-}^{*} - n (w_{-}^{*})^{2} - 2n \right] }
    & 0 \leq u_{-}^{*} < 3b/2\\
    \exp{ \left[ \left(- u_{-}^{*}b^{-1} + 1 \right)n \right] }
    & 3b/2 \leq u_{-}^{*},
  \end{array} \right.
\end{equation}
where $w_{\pm}^{*} \coloneqq \sqrt{ \pm u_{\pm}^{*} b^{-1} + 2 }$ (double sign in the same order).
From Eq. \ref{eq-a3}, we obtain
\begin{equation}
  w_{+}^{*} = \sqrt{2} + \beta,
\end{equation}
and
\begin{equation}
  u_{+}^{*} = (\beta^{2} + 2 \sqrt{2} \beta)b,
\end{equation}
where $\beta \coloneqq \sqrt{ -\ln \alpha/n }$.
In addition, from Eq. \ref{eq-a4},
\begin{equation}
  u_{-}^{*}= \left\{ \begin{array}{ll}
    (-\beta^{2} + 2 \sqrt{2}\beta)b & 0 \leq \beta < 1/\sqrt{2}\\
    (\beta^{2} + 1)b & 1/\sqrt{2} \leq \beta.
  \end{array} \right.
\end{equation}
The theorem follows from Eqs. \ref{eq-a1} and \ref{eq-a2}.

\section{Compared Algorithms}\label{app2}
We compared our algorithm with Algorithms \ref{ThresholdAscent}-\ref{RandomSearch}.
We employed the following hyperparameters and applied some modifications for the implementation.
In {\it ThresholdAscent}, the hyper-parameters were set to $s=100$ and $\delta=2\ln\nu$.
We used the reward ranking instead of the iteration
used in the original code \citep{streeter2006simple}.
In {\it RobustUCBMax}, we set $u=r^{100\text{-th}}$, $v=(r^{\max}-u)^{1+\epsilon}$,
and $\epsilon=0.4$, according to the original paper \citep{achab2017max}.
Although the original paper employed the robust UCB with the truncated mean estimator, 
we used a simple version of the robust UCB \citep{bubeck2013bandits}.
In {\it spUCB}, $c=0.1$ and $D=32$ are used as the hyper-parameters.
These values are recommended in the original paper \citep{schadd2008single}.
In {\it UCBE} \citep{audibert2010best} and the conventional UCB \citep{auer2002finite},
we used $c=1$ as the hyperparameter.
In some algorithms, we estimated the variance parameter, $\sigma$,
as the sample variance of the first $P=10$ random searches.
\begin{algorithm}[ht]
  \caption{ThresholdAscent}\label{ThresholdAscent}
  \begin{algorithmic}[1]
    \REQUIRE
    number of arms $K$,
    time horizon $T$,
    the $s$-th maximum of observed reward $r^{s\text{-th}}$
    number of times the $k$-th arm is selected $n_{k}$,
    the $i$-th reward from the $k$-th arm $r_{k,i}$,
    and hyper-parameter $\delta$.
    \ENSURE selected arm index $\hat{k}$.
    \FOR{\textbf{each} $k\in[K]$}
    \IF{$n_k=0$ or $\nu<2$}
    \STATE $z_{k} \leftarrow \infty$
    \ELSE
    \STATE $S_k=\sum_{i\in[n_k]}\mathbbm{1}[r_{k,i}>r^{s\text{-th}}]$
    \STATE $\alpha \leftarrow \ln(2TK/\delta)$
    \STATE $z_{k} \leftarrow S_k/n_k + (\alpha+\sqrt{\alpha(2S_k+\alpha)})/n_k$
    \ENDIF
    \ENDFOR
    \STATE $\hat{k} \leftarrow \argmax_{k\in[K]}z_{k}$
    \RETURN $\hat{k}$
  \end{algorithmic}
\end{algorithm}
\begin{algorithm}[ht]
  \caption{RobustUCBMax}
  \begin{algorithmic}[1]
    \REQUIRE
    number of arms $K$,
    number of times the $k$-th arm is selected $n_{k}$,
    the $i$-th reward from the $k$-th arm $r_{k,i}$,
    and hyper-parameters $u$, $v$, and $\epsilon$.
    \ENSURE selected arm index $\hat{k}$.
    \STATE $\nu=\sum_{k\in[K]}n_k$
    \FOR{\textbf{each} $k\in[K]$}
    \IF{$n_k=0$ or $\nu<2$}
    \STATE $z_{k} \leftarrow \infty$
    \ELSE
    \STATE $S_k=\sum_{i\in[n_k]}r_{k,i}\mathbbm{1}[r_{k,i}>u]$
    \STATE $z_{k} \leftarrow S_k/n_k \
      + 4v^{1/(1+\epsilon)}(2\ln\nu/n_k)^{\epsilon/(1+\epsilon)}$
    \ENDIF
    \ENDFOR
    \STATE $\hat{k} \leftarrow \argmax_{k\in[K]}z_{k}$
    \RETURN $\hat{k}$
  \end{algorithmic}
\end{algorithm}
\begin{algorithm}[ht]
  \caption{spUCB}
  \begin{algorithmic}[1]
    \REQUIRE
    number of arms $K$,
    current time $\tau$,
    number of times the $k$-th arm is selected $n_{k}$,
    sum of the rewards obtained from the $k$-th arm $R_{k}$,
    sum of square rewards obtained from the $k$-th arm \modifyA{$Q_k$},
    and sample variance obtained from the first $P$ trials $\sigma$.
    \ENSURE selected arm index $\hat{k}$.
    \IF{$\tau \leq P$}
    \STATE $\hat{k} \leftarrow RandomSearch(K)$
    \ELSE
    \STATE $\nu=\sum_{k\in[K]}n_k$
    \FOR{\textbf{each} $k\in[K]$}
    \IF{$n_k=0$ or $\nu<2$}
    \STATE $z_{k} \leftarrow \infty$
    \ELSE
    \STATE $m_k \leftarrow R_k/n_k$
    \STATE $z_{k} \leftarrow m_k + c\sigma\sqrt{\ln\nu/n_k} \
      + \sqrt{\frac{Q_k-n_km_k^2+D}{n_k}}$
    \ENDIF
    \ENDFOR
    \STATE $\hat{k} \leftarrow \argmax_{k\in[K]}z_{k}$
    \ENDIF
    \RETURN $\hat{k}$
  \end{algorithmic}
\end{algorithm}
\begin{algorithm}[ht]
  \caption{UCBE}
  \begin{algorithmic}[1]
    \REQUIRE
    number of arms $K$,
    current time $\tau$,
    number of times the $k$-th arm is selected $n_{k}$,
    sum of the rewards obtained from the $k$-th arm $R_{k}$,
    and sample variance obtained from the first $P$ trials $\sigma$.
    \ENSURE selected arm index $\hat{k}$.
    \IF{$\tau \leq P$}
    \STATE $\hat{k} \leftarrow RandomSearch(K)$
    \ELSE
    \STATE $\nu=\sum_{k\in[K]}n_k$
    \FOR{\textbf{each} $k\in[K]$}
    \IF{$n_k=0$ or $\nu<2$}
    \STATE $z_{k} \leftarrow \infty$
    \ELSE
    \STATE $z_{k} \leftarrow R_k/n_k + c\sigma\sqrt{\nu/n_k}$
    \ENDIF
    \ENDFOR
    \STATE $\hat{k} \leftarrow \argmax_{k\in[K]}z_{k}$
    \ENDIF
    \RETURN $\hat{k}$
  \end{algorithmic}
\end{algorithm}
\begin{algorithm}[ht]
  \caption{UCB}
  \begin{algorithmic}[1]
    \REQUIRE
    number of arms $K$,
    current time $\tau$,
    number of times the $k$-th arm is selected $n_{k}$,
    sum of the rewards obtained from the $k$-th arm $R_{k}$,
    and sample variance obtained from the first $P$ trials $\sigma$.
    \ENSURE selected arm index $\hat{k}$.
    \IF{$\tau \leq P$}
    \STATE $\hat{k} \leftarrow RandomSearch(K)$
    \ELSE
    \STATE $\nu=\sum_{k\in[K]}n_k$
    \FOR{\textbf{each} $k\in[K]$}
    \IF{$n_k=0$ or $\nu<2$}
    \STATE $z_{k} \leftarrow \infty$
    \ELSE
    \STATE $z_{k} \leftarrow R_k/n_k + c\sigma\sqrt{\ln\nu/n_k}$
    \ENDIF
    \ENDFOR
    \STATE $\hat{k} \leftarrow \argmax_{k\in[K]}z_{k}$
    \ENDIF
    \RETURN $\hat{k}$
  \end{algorithmic}
\end{algorithm}
\begin{algorithm}[ht]
  \caption{RandomSearch}\label{RandomSearch}
  \begin{algorithmic}[1]
    \REQUIRE
    number of arms $K$.
    \ENSURE selected arm index $\hat{k}$.
    \STATE $\hat{k} \leftarrow random(K)$
    \COMMENT{randomly select any of $1,...,K$.}
    \RETURN $\hat{k}$
  \end{algorithmic}
\end{algorithm}

\clearpage

\bibliography{main}

\end{document}